  \setlist{leftmargin=*,noitemsep}
\tikzstyle{startstop} = [rectangle, rounded corners, minimum width=3cm, minimum height=1cm,text centered, draw=black, fill=red!30]
\tikzstyle{io} =[rectangle, rounded corners, minimum width=3cm, minimum height=1cm,text centered, draw=black, fill=blue!30]
\tikzstyle{process} = [rectangle, rounded corners, minimum width=3cm, minimum height=1cm,text centered, draw=black, fill=orange!30]
\tikzstyle{decision} = [rectangle, rounded corners, minimum width=3cm, minimum height=1cm,text centered, draw=black, fill=green!30]
\tikzstyle{arrow} = [thick,->,>=stealth]
\def\reals{\bR}
\def\RP{\mathrm{RP}}
\def\CRP{\mathrm{CRP}}
\def\tP{\widetilde{P}}
\def\tR{\widetilde{R}}
\def\tZ{\widetilde{Z}}
\title{Does enforcing fairness mitigate biases caused by subpopulation shift?}
\author{%
  Subha Maity* \\
  Department of Statistics\\
  University of Michigan\\
  \texttt{smaity@umich.edu} \\
  \And 
  Debarghya Mukherjee* \\
  Department of Statistics\\
  University of Michigan\\
  \texttt{mdeb@umich.edu} \\
  \AND
  Mikhail Yurochkin \\
  IBM Research\\
  MIT-IBM Watson AI Lab\\
  \texttt{mikhail.yurochkin@ibm.com} \\
  \And
  Yuekai Sun \\
  Department of Statistics\\
  University of Michigan\\
  \texttt{yuekai@umich.edu} \\
}
\newcommand*{\addFileDependency}[1]{
  \typeout{(#1)}
  \@addtofilelist{#1}
  \IfFileExists{#1}{}{\typeout{No file #1.}}
}
\newcommand*{\myexternaldocument}[1]{%
    \externaldocument{#1}%
    \addFileDependency{#1.tex}%
    \addFileDependency{#1.aux}%
}
\begin{document}

\maketitle

\begin{abstract}
Many instances of algorithmic bias are caused by subpopulation shifts. For example, ML models often perform worse on demographic groups that are underrepresented in the training data. In this paper, we study whether enforcing algorithmic fairness during training improves the performance of the trained model in the \emph{target domain}. On one hand, we conceive scenarios in which enforcing fairness does not improve performance in the target domain. In fact, it may even harm performance. On the other hand, we derive necessary and sufficient conditions under which enforcing algorithmic fairness leads to the Bayes model in the target domain. We also illustrate the practical implications of our theoretical results in simulations and on real data.
\end{abstract}

\renewcommand*{\thefootnote}{\fnsymbol{footnote}}
\footnotetext[1]{Equal Contribution.}
\renewcommand*{\thefootnote}{\arabic{footnote}}
\setcounter{footnote}{0}

\section{Introduction}

There are many instances of distribution shifts causing performance disparities in machine learning (ML) models. For example, \citet{buolamwini2018Gender} report commercial gender classification models are more likely to misclassify dark-skinned people (than light-skinned people). This is (in part) due to the abundance of light-skinned examples in training data. Similarly, pedestrian detection models sometimes have trouble recognizing dark-skinned pedestrians \citep{wilson2019Predictive}. Another prominent example is the poor performance of image processing models on images from developing countries due to the scarcity of images from such countries in publically available image datasets \cite{shankar2017No}. 

Unfortunately, many algorithmic fairness practices were not developed with distribution shifts in mind. For example, the common algorithmic fairness practice of enforcing performance parity on certain demographic groups \citep{agarwal2018Reductions,cotter2019Optimization} implicitly assumes performance parity on training data generalize to the target domain, but distribution shifts between the training data and target domain renders this assumption invalid. 

In this paper, we consider \emph{subpopulation shifts} as a source of algorithmic biases and study whether the common algorithmic fairness practice of enforcing performance parity on certain demographic groups mitigate the resulting (algorithmic) biases \emph{in the target domain}. Such algorithmic fairness practices are common enough that there are methods \citep{agarwal2018Reductions,agarwal2019Fair} and software (\eg\ TensorFlow Constrained Optimization \citep{cotter2018Training}) devoted to operationalizing them. There are other sources of algorithmic biases (\eg\ posterior drift \cite{obermeyer2019Dissecting}), but we focus on algorithmic biases caused by subpopulation shifts in this paper. Our main contributions are:
\begin{enumerate}
\item We propose risk profiles as a way of summarizing the performance of ML models on subpopulations. As we shall see, this summary is particularly suitable for studying the performance of risk minimization methods.
\item We show that enforcing performance parity during training may not mitigate performance disparities in the target domain. In fact, it may even harm overall performance.
\item We decompose subpopulation shifts into two parts, a recoverable part orthogonal to the fair constraint and a non-recoverable part, and derive necessary and sufficient conditions on subpopulation shifts under which enforcing performance parity improves performance in the target domain (see Section \ref{sec:FRM-closes-performance-gaps}).
\end{enumerate}
One of the main takeaways of our study is a purely statistical way of evaluating the notion of algorithmic fairness for subpopulation shift: an effective algorithmic fairness practice should improve overall model performance in the target domain. Our theoretical results characterize when this occurs for risk-based notions of algorithmic fairness.

\section{Problem setup}

We consider a standard classification setup. Let $\cX\subset\reals^d$ be the feature space, $\cY$ be the set of possible labels, and $\cA$ be the set of possible values of the sensitive attribute. In this setup, training and test examples are tuples of the form $(X,A,Y)\in\cX\times\cA\times\cY$. If the ML task is predicting whether a borrower will default on a loan, then each training/test example corresponds to a loan. The features in $X$ may include the borrower's credit history, income level, and outstanding debts; the label $Y\in\{0,1\}$ encodes whether the borrower defaulted on the loan; the sensitive attribute may be the borrower's gender or race.

Let $P^*$ and $\tP$ be probability distributions on $\cX\times\cA\times\cY$. We consider $\tP$ as the distribution of the training data and $P^*$ as the distribution of data in a hypothetical target domain. For example, $P^*$ may be the distribution of data in the real world, and $\tP$ is a biased sample in which certain demographic groups are underrepresented. The difference $P^* - \tP$ is the distribution shift. In practice, distribution shifts often arise due to sampling biases during the (training) data collection process, so we call $P^*$ and $\tP$ unbiased and biased respectively and refer to $P^* - \tP$ as the bias (in the training data). Henceforth $\bbE^*$ (resp. $\tilde \bbE$) will denote expectation under $P^*$ (resp. $\tilde P$). The set of all hypotheses under consideration is denoted by $\cH$ and $\ell: \cY \times \cY \mapsto \reals_+$ denotes the loss function under consideration. In Section \ref{sec:RP} and \ref{sec:CRP} we assume the set of sensitive attribute $A$ is discrete. The case with continuous $A$ is relegated to the supplementary document (Appendix \ref{sec:cont-disc-attr}).

\section{Benefits and drawbacks of enforcing risk parity}
\label{sec:RP}

To keep things simple, we start by considering the effects of enforcing \textbf{risk parity (RP)}. This notion is closely related to the notion of \emph{demographic parity (DP)}. Recall DP requires the \emph{output} of the ML model $h(X)$ to be independent of the sensitive attribute $A$: $h(X)\perp A$. RP imposes a similar condition on the \emph{risk} of the ML model.

\begin{definition}[risk parity]
\label{def:RP}
A model $h$ satisfies risk parity with respect to the  distribution $P$ on $\cX \times \cY \times \cA$ if
\[
\Ex_P\big[\ell(h(X),Y)\mid A=a\big] = \Ex_P\big[\ell(h(X),Y)\mid A=a'\big]\text{ for all $a, a'\in\cA$ and all $h \in \cH$.}
\]
\end{definition}

RP is widely used in practice to measure algorithmic bias in ML models. For example, the US National Institute of Standards and Technology (NIST) tested facial recognition systems and found that the systems misidentify blacks at rates 5 to 10 times higher than whites \cite{simonite2019Best}. By comparing the error rates of the system on blacks and whites, NIST is implicitly adopting RP as its definition of algorithmic fairness.

It is not hard to see that RP is equivalent to linear constraints on the \textbf{risk profile} ($R_{\tilde P}(h)$) of an ML model with respect to $\tP$:
\begin{equation}
R_{P}(h) \triangleq \left\{\Ex_P\big[\ell(h(X),Y)\mid A=a\big]\right\}_{a\in\cA}.
\label{eq:RP-risk-profiles}
\end{equation}
For notational simplicity define $\cR$ as the set of all risk profiles with respect to the training distribution $\tilde P$, i.e. $\cR\triangleq\{R(h)\mid h\in\cH\}$. The risk profile of a model summarizes its performance on subpopulations. In terms of risk profiles, RP with respect to distribution $P$ requires $R_{P}(h) = c\mathbf{1}$ for some constant $c \in \bbR$. This is a linear constraint. The set of all risk profiles that satisfy the RP constraint with respect to the training distribution $\tilde P$ constitutes the following subspace: 
\[\textstyle
\cF_{\RP} \equiv \cF_{\RP}(\tilde P)\triangleq \{R_{\tilde P}(h)\in\reals^{|\cA|}\bigm\vert R_{\tilde P}(h) = c\mathbf{1}, \mathbf{1} \in \bbR^{|\cA|}, c \in \bbR\}.
\]

Therefore, we enforce RP by solving (the empirical version of) a constraint risk minimization problem
\begin{equation}
\left\{\begin{aligned} &\min\nolimits_{h\in\cH} & &\widetilde{\Ex}\big[\ell(h(X),Y)\big] \\
&\st && R_{\tilde P}(h) \in \cF_{\RP}
\end{aligned}\right\} \equiv\left\{\begin{aligned} &\min\nolimits_{R\in\cR} & &\langle \tP_A,R_{\tilde P}(h)\rangle \\
&\st && R_{\tilde P}(h) \in\cF_{\RP}
\end{aligned}\right\},
\label{eq:enforcing-RP}
\end{equation}
where $\tP_A\in[0,1]^{|\cA|}$ is the marginal distribution of $A$ and consequently $\langle \tP_A,R\rangle = \widetilde{\Ex}\big[\ell(h(X),Y)\big]$. Note that, in \eqref{eq:enforcing-RP}, the inner product $\langle \cdot, \cdot \rangle$ is euclidean inner product on $\reals^{|\cA|}$. We define the minimizer of \eqref{eq:enforcing-RP} as $\widetilde h_{\cF}$ and its corresponding risk profile as $\widetilde \cR_{\cF} = R_{\widetilde P}(\widetilde h_{\cF})$. See Figure \ref{fig:enforcing-RP} for a graphical depiction of \eqref{eq:enforcing-RP} when there are two groups ($\cA = \{0,1\}$). Here we see the main benefit of summarizing model performance with risk profiles: risk minimization problems are equivalent to linear optimization problems in terms of risk profiles (objective and constrain functions are linear in terms of risk profile). This allows us to simplify our study of the effects of enforcing risk parity by reasoning about the risk profiles of the resulting models. Hereafter, we refer to this approach as \emph{fair risk minimization (FRM)}. It is not new; similar constrained optimization problems have appeared in the algorithmic fairness literature (\eg\ see \cite{agarwal2018Reductions,donini2018Empirical,cotter2018Training}). Our goal is evaluating whether this approach mitigates algorithmic biases and improves model performance \emph{in the target domain}. There are efficient algorithms for solving \eqref{eq:FRM}. One popular algorithm is a reductions approach by \citet{agarwal2018Reductions}, which solves a sequence of weighted classification problems with appropriately chosen weights to satisfy the desired algorithmic fairness constraints. This algorithm outputs randomized classifiers, which justifies the subsequent convexity assumption on the set of risk profiles.

\begin{figure}
  \centering
  \includegraphics[width=1\columnwidth]{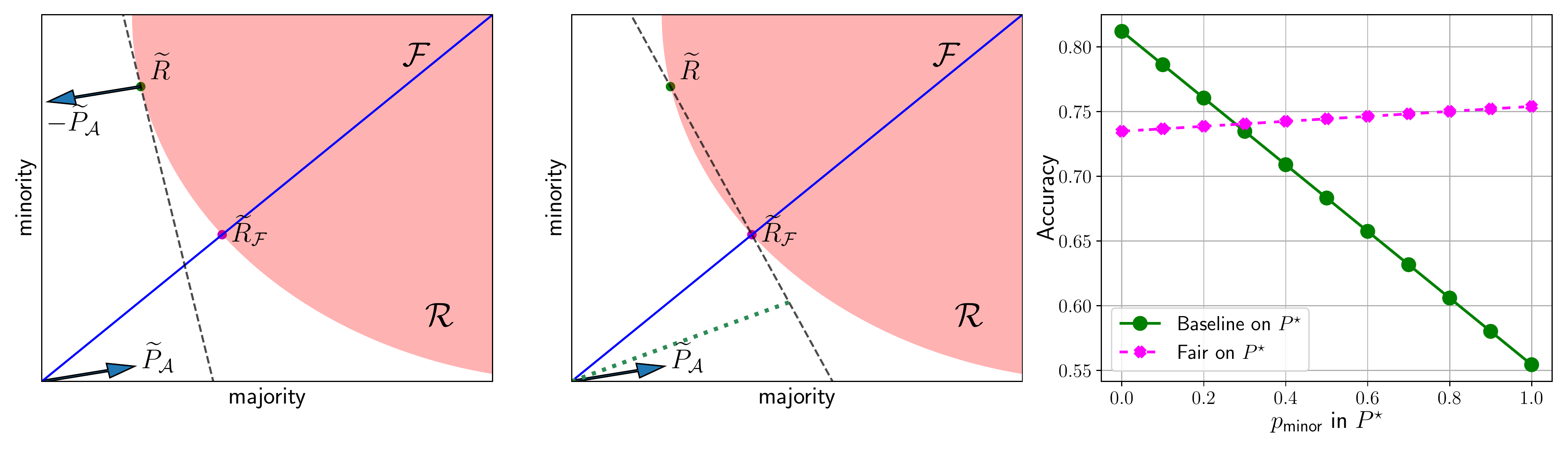}
  
  \caption{Fair risk minimization problem when there are two groups. Recall $\cR$ is a set of risk profiles and $\cF$ is the set of risk profiles that satisfy risk parity. The horizontal and vertical coordinates of the risk profiles represent the risk of the model on the majority and minority subpopulations. In the left plot, we see the empirical risk minimization (ERM) optimal point $\tR$ and the fair risk minimization (FRM) optimal point $\tR_{\cF}$. In the center plot, we see that FRM can both improve and harm performance in the target domain (as long as the assumptions of \ref{thm:RP-benefits-drawbacks} are satisfied). The green dotted line separates the $P_A^*$'s that lead to worse and improved performance in the target domain: if $P_A^*$ falls below the green line, then FRM harms performance in the target domain. In the right plots, we reproduce this effect in a simulation. As the fraction of samples from the minority group decreases in the target domain, there is a point beyond which enforcing fairness harms accuracy (in the target domain). We refer to Appendix 
  C
  for the simulation details.}
  \label{fig:enforcing-RP}
\end{figure}

In order to relate model performance in the training and target domains, some restrictions on the distribution shift/bias is necessary, as it is impossible to transfer performance parity during training to the target domain if they are highly disparate. At a high-level, we assume the distribution shift is a \emph{subpopulation shift} \cite{koh2020WILDS}. Formally, we assume that the risk profiles of the models with respect to $P^*$ and the profiles with respect to $\tP$ are identical:
\begin{equation}
\Ex^*\big[\ell(h(X),Y)\mid A=a\big] = \widetilde{\Ex}\big[\ell(h(X),Y)\mid A=a\big]
\label{eq:identical-risk-profiles}\text{ for all $a\in\cA, h \in \cH$.}
\end{equation}
i.e. $R_{\tilde P}(h) = R_{P^*}(h)$ for all $h \in \cH$. We note that this assumption is (slightly) less restrictive than the usual subpopulation shift assumption because it only requires the expected value of the loss (instead of the full conditional distribution of $(X,Y)$ given  $A$ to be identical in the training and target domains. Furthermore, this assumption is implicit in enforcing risk-based notions of algorithmic fairness. If the risk profiles are not identical in the training and target domains, then enforcing risk-based notions of algorithmic fairness during training is pointless because performance parity during training may not generalize to the target domain. 
We are now ready to state our characterization of the benefits and drawbacks of enforcing RP. To keep things simple, we assume there are only two groups: a majority group and a minority group. As we shall see, depending on the marginal distribution of the subpopulations in the target domain $P_A^*$, enforcing RP can harm or improve overall performance in the target domain.

\begin{theorem}
\label{thm:RP-benefits-drawbacks}
Without loss of generality, let first entry of $\tP_A$ be the fraction of samples from the majority group in the training data i.e. $\tilde P(A = 1)$. Assume
\begin{enumerate}
\item there are only two groups and the set of risk profiles $\cR\subseteq \reals^2$;
\item subpopulation shift: the risk profiles with respect to $\tP$ and $P^*$ are identical;
\item $(\tR_1,\tR_0) = \tR  \triangleq \argmin_{R\in\cR}\langle\tP_A,R\rangle$ is the risk profile of the risk minimizer;
\item $((\tR_\cF)_1,(\tR_\cF)_0) = \tR_\cF  \triangleq \argmin_{R\in\cR\cap \cF_{\RP}}\langle\tP_A,R\rangle$ is the risk profile of the fair risk minimizer.
\end{enumerate}
Then we have: 
$$
\langle P_A^*,\tR\rangle 
\begin{cases}
\le \langle P_A^*,\tR_{\cF}\rangle \, & \text{ if } P^*(A = 1) \ge \frac{\tR_0 - (\tR_{\cF})_0}{\tR_0-\tR_1} \\
\ge \langle P_A^*,\tR_{\cF}\rangle \, & \text{ otherwise} \,.
\end{cases}
$$
Therefore, enforcing RP harms overall performance in the target domain in the first case, while improves in the second. 

\end{theorem}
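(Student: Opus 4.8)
The plan is to collapse the whole comparison into a single affine inequality in the scalar $p \triangleq P^*(A=1)$. The engine is exactly the point emphasized in the text: once we work with risk profiles, both the training objective and the fairness constraint are linear, so evaluating either model in the target domain is just an inner product of its (fixed) risk profile against the target weight vector $P_A^* = (p,1-p)$.

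First I would isolate the two structural facts that trivialize the calculation. (i) Since $\tR_\cF \in \cF_{\RP}$, by definition its two coordinates coincide: $(\tR_\cF)_1 = (\tR_\cF)_0 =: c$, i.e. $\tR_\cF = c\mathbf{1}$. (ii) By the subpopulation-shift assumption \eqref{eq:identical-risk-profiles} we have $R_{\tP}(h) = R_{P^*}(h)$ for every $h$, so the profiles $\tR$ and $\tR_\cF$ computed on $\tP$ are precisely the profiles that govern target performance; only the marginal weighting changes from $\tP_A$ to $P_A^*$. Hence the target risks are $\langle P_A^*,\tR\rangle = p\tR_1 + (1-p)\tR_0$ and $\langle P_A^*,\tR_\cF\rangle = p\,c + (1-p)\,c = c$, the latter being independent of $p$ exactly because the fair profile sits on the diagonal.

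Next I would form $D(p) \triangleq \langle P_A^*,\tR\rangle - \langle P_A^*,\tR_\cF\rangle = (\tR_0 - c) + p(\tR_1 - \tR_0)$, which is affine in $p$, and solve $D(p)\le 0$. Rearranging gives $p(\tR_1-\tR_0) \le c - \tR_0$; dividing through by $\tR_1-\tR_0$ and substituting $c=(\tR_\cF)_0$ produces the stated threshold $p \ge \frac{\tR_0 - (\tR_\cF)_0}{\tR_0 - \tR_1}$, with the complementary strict inequality giving the ``improves'' case. Note that $D(p)\le 0$ means the ERM model has the smaller (better) target risk, which is the first case in which FRM harms performance, so the bookkeeping of which case is which comes out correctly.

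The one place that needs real care, and where the direction of the final inequality is actually decided, is the sign of $\tR_1 - \tR_0$: the $\le$ in $D(p)\le 0$ flips to the $\ge$ in the theorem precisely because $\tR_1 - \tR_0 < 0$. I would therefore argue that the minority group carries strictly higher risk under the unconstrained minimizer, $\tR_1 < \tR_0$, which is the whole premise of the setup (the majority, being overrepresented in $\tP_A$, is favored by ERM). The degenerate case $\tR_1 = \tR_0$ I would dispose of first: then $\tR$ already satisfies RP, so $\tR = \tR_\cF$ and the models coincide. For the generic case I would use the convexity of $\cR$ (the randomized classifiers of \cite{agarwal2018Reductions}) together with the optimality of $\tR$ and $\tR_\cF$ to show $c$ lies between $\tR_1$ and $\tR_0$: optimality of $\tR$ gives $\tP(A{=}1)\tR_1 + \tP(A{=}0)\tR_0 \le c$, forcing $\tR_1 \le c$, while the natural monotonicity of $\cR$ under increasing risk gives $c \le \tR_0$. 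This simultaneously certifies the sign-flip and places the threshold in $[0,1]$, so it is a genuine cutoff on the admissible values of $P^*(A=1)$. Everything else is the one-line affine manipulation above.
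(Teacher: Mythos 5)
Your main computation is exactly the paper's proof: expand $\langle P_A^*,\tR-\tR_\cF\rangle$ as an affine function of $p=P^*(A{=}1)$, use $(\tR_\cF)_0=(\tR_\cF)_1$ to collapse it to $(\tR_0-(\tR_\cF)_0)-p(\tR_0-\tR_1)$, and read off the threshold. Up to that point you and the paper agree verbatim; the paper then simply divides by $\tR_0-\tR_1$ without comment.

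Where you go beyond the paper is in trying to justify that division, and that is also where your argument breaks. The claim that $\tP(A{=}1)\tR_1+\tP(A{=}0)\tR_0\le c$ ``forces $\tR_1\le c$'' is false: a weighted average being at most $c$ only forces $\min(\tR_0,\tR_1)\le c$, so you would need $\tR_1\le\tR_0$ already --- the very fact you are trying to establish. And ``the natural monotonicity of $\cR$ under increasing risk'' is not an assumption anywhere in the theorem; convexity of $\cR$ together with optimality of $\tR$ and $\tR_\cF$ does not yield $\tR_1\le c\le\tR_0$. Indeed one can construct a convex $\cR\subset\reals^2$ (e.g., when the majority group is intrinsically harder to classify) whose $\tP_A$-minimizer has $\tR_1>\tR_0$; there the sign of $\tR_0-\tR_1$ reverses, the inequality $D(p)\le 0$ resolves the other way, and the displayed case split fails as stated. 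So $\tR_0>\tR_1$ is genuinely an additional hypothesis (implicit in the paper's narrative that ERM favors the overrepresented group), not a derivable consequence. The honest fix is to assume it explicitly, dispose of $\tR_0=\tR_1$ as the degenerate case you already identified, and note that the conclusion flips when $\tR_0<\tR_1$. With that caveat your proof is complete and is the same argument as the paper's; you have in fact located a gap the paper leaves silent, you just have not closed it by the means you propose.
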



In hindsight, this result is intuitive. If $P_A^*$ is close to $\tP_A$ (\eg\ the minority group is underrepresented in the training data but not by much), then enforcing RP may actually harm overall performance in the target domain. This is mainly due to the trade-off between accuracy and fairness in the IID setting (no distribution shift). If there is little difference between the training and target domains, then we expect the trade-off between accuracy and fairness to manifest (albeit to a lesser degree than in IID settings).

\section{Benefits and drawbacks of enforcing conditional risk parity}
\label{sec:CRP}

\subsection{Risk-based notions of algorithmic fairness}

In this section, we consider more general risk-based notions of algorithmic fairness, namely Conditional Risk Parity (CRP) which is defined as follows: 
\begin{definition}[Conditional Risk Parity]
\label{def:CRP}
a model $h \in \cH$ is said to satisfy CRP if: 
\begin{equation}
\Ex_P\big[\ell(h(X),Y)\mid A=a,V=v\big] = \Ex_P\big[\ell(h(X),Y)\mid A=a',V=v\big]
\label{eq:risk-based-algorithmic-fairness}
\end{equation}
for all $a,a'\in\cA$, $v\in\cV$, where $V$ is known as the \textbf{discriminative attribute} \citep{ritov2017conditional}. 
\end{definition}
To keep things simple, we assume $V$ is finite-valued, but it is possible to generalize our results to risk-based notions of algorithmic fairness with more general $V$'s (see Appendix 
B). {We also point out that this definition of CRP does not cover calibrations where one conditions on the model outcome $\hat Y$}.

It is not hard to see that risk parity is a special case of \eqref{eq:risk-based-algorithmic-fairness} in which $V$ is a trivial random variable. Another prominent instance is when $V = Y$, i.e. the risk profile satisfies: 
\[
\Ex_P\big[\ell(h(X),Y)\mid A=a,Y=y\big] = \Ex_P\big[\ell(h(X),Y)\mid A=a',Y=y\big]
\]
for all $a, a'\in\cA$, $y\in\cY$. 
Definition \ref{def:CRP} is motivated by the notion of \emph{equalized odds (EO)} \citep{hardt2016Equality} in classification. Recall EO requires the \emph{output} of the ML model $h(X)$ to be independent of the sensitive attribute $A$ \emph{conditioned on the label}: $h(X)\perp A\mid Y$. CRP imposes a similar condition on the \emph{risk} of the ML model; \ie\ the risk of the ML model must be independent of the sensitive attribute conditioned on the discriminative attribute (with label as a special case). Therefore CRP can be viewed as a general version of EO, where we relax the conditional independence of $h$ to  equality of conditional means. 
CRP is also closely related to \emph{error rate balance} \citep{chouldechova2017Fair} and \emph{overall accuracy equality} \citep{berk2017Fairness} in classification. 

Like RP, \eqref{eq:risk-based-algorithmic-fairness} is equivalent to linear constraints on the risk profiles of ML models. Here (with a slight abuse of notation) we define the risk profile of a classifier $h$ under distribution $P$ as: 
\begin{equation}
R_P(h) \triangleq \left\{\Ex_P\big[\ell(h(X),Y)\mid A=a,V=v\big]\right\}_{a\in\cA,v\in\cV}
\label{eq:CRP-risk-profiles}
\end{equation}
Compared to \eqref{eq:RP-risk-profiles}, \eqref{eq:CRP-risk-profiles} offers a more detailed summary of the performance of ML models on subpopulations that not only share a common value of the sensitive attribute $A$, but also a common value of the discriminative attribute $V$. The general fairness constraint \eqref{eq:risk-based-algorithmic-fairness} on the training distribution $\tilde P$ is equivalent to $R_{\tilde P}(h)\in\cF_{\CRP}$, where $\cF_{\CRP}$ is a linear subspace defined as: 
\[\textstyle
\cF_{\CRP} \triangleq \{R_{\tilde P}(h)\in\reals^{|\cA|\times|\cY|}\mid R_{\tilde P}(h) = \mathbf{1}\bu^{\top}, \mathbf{1} \in \bbR^{|\cA|}, \bu \in \bbR^{|\cY|}, h \in \cH\}.
\]
In this section, we study a version of \eqref{eq:enforcing-RP} with this general notions of algorithmic fairness:
\begin{equation}
\left\{\begin{aligned} &\min\nolimits_{h\in\cH} & &\widetilde{\Ex}\big[\ell(h(X),Y)\big] \\
&\st && R_{\tilde P}(h) \in \cF_{\CRP}
\end{aligned}\right\}  = \left\{\begin{aligned} &\min\nolimits_{R\in\cR} & &\langle \tP_{A,V},R\rangle \\
&\st && R\in\cF_{\CRP}
\end{aligned}\right\},
\label{eq:FRM}
\end{equation}
where $\tP_{A,V}\in[0,1]^{|\cA|\times|\cV|}$ is the marginal distribution of $(A,V)$, i.e. $\langle \tP_{A,V},R\rangle = \widetilde{\Ex}\big[\ell(h(X),Y)\big]$. As before we define the minimizer of \eqref{eq:FRM} as $\widetilde h_{\cF}$ and its corresponding risk profile as $\widetilde \cR_{\cF} = R_{\widetilde P}(\widetilde h_{\cF})$. We note that \eqref{eq:CRP-risk-profiles} has the same benefit as the definition in \eqref{eq:RP-risk-profiles}: the fair risk minimization problem in \eqref{eq:FRM} is equivalent to a linear optimization problem in terms of the risk problems. This considerably simplifies our study of the efficacy of enforcing risk-based notions of algorithmic fairness.

\subsection{Subpopulation shift in the training data}
\label{sec:bias}

Similar to equation \eqref{eq:identical-risk-profiles}, we assume that the risk profiles with respect to $P^*$ and $\tP$ are identical:
\begin{equation}
\Ex^*\big[\ell(h(X),Y)\mid A=a,V=v\big] = \widetilde{\Ex}\big[\ell(h(X),Y)\mid A=a,V=v\big] \ \ \forall \  a \in \cA, v \in \cV, h \in \cH \,.
\label{eq:CRP-identical-risk-profiles}
\end{equation}
i.e. $R_{\tilde P}(h) = R_{P^*}(h)$ for all $h \in \cH$. {This definition of subpopulation shift (equation (4.4)) is borrowed from the domain adaptation literature (see \cite{koh2020WILDS,santurkar2021breeds}). The difference in our definition is  that we require equality of the expectations of the loss functions, whereas these works assume the distributions to be equal for the sub-populations. Note that under subpopulation shift $R_{\tilde P}$ are $R_{P^\star}$ are equal over $\cH$. In the remaining part of Section \ref{sec:CRP} we shall drop the probability in subscript and denote them as $R$.} We note the crucial role of the discriminative attributes in \eqref{eq:CRP-identical-risk-profiles}: the risk profiles are only required to be identical on subpopulations that share a value of the discriminative attribute. A good choice of discriminative attributes keeps the training data informative by ensuring the risk profiles are identical on the training data and at test time. Here are two examples of good discriminative attributes.

\begin{example}[Under-representation bias]
In binary classification, training data may suffer from \textbf{under-representation bias}. This kind of bias arises when positive examples from disadvantaged groups are under-represented in the training data. Here is an example of a data generating process that suffers from label bias: (i) sample training examples $(X_i,Y_i,A_i)$ from $P^*$, (ii) discard training examples from the disadvantaged group ($A_i=0$) with positive label ($Y_i=1$) with probability $\beta$. This leads to 
\[
\tP(X,Y,A) \propto P^*(X,Y,A)\cdot(1-(1-\beta)\ones\{A=0,Y=1\}).
\]
Because there are fewer positive examples from the disadvantaged group in the training data (compared to test data), this kind of bias causes the ML model to predict mostly negative outcomes for the disadvantaged group. In practice, this kind of bias may creep into the training data more subtly. For example, if human judgements is a crucial part of the data generating process, then implicit biases may lead to over-representation of negative examples from disadvantaged groups in the training data \citep{yeom2019Discriminative}.

For training data with underrepresentation bias, a good choice of discriminative attribute is the label. This is because the training data is a filtered version of the data at test time, and the filtering process only depends on the label (and sensitive attribute). Thus the class conditionals at test time are preserved in the training data; \ie\ $\tP_{X\mid a,y} = P^*_{X\mid a,y}$ for all $a\in\cA$, $y\in\cY$. 

\end{example}

\subsection{Fair risk minimization may not improve overall performance} 
\label{sec:FRM-widens-performance-gaps}

\begin{figure}
 \centering
\includegraphics[angle=90,width=0.4\textwidth]{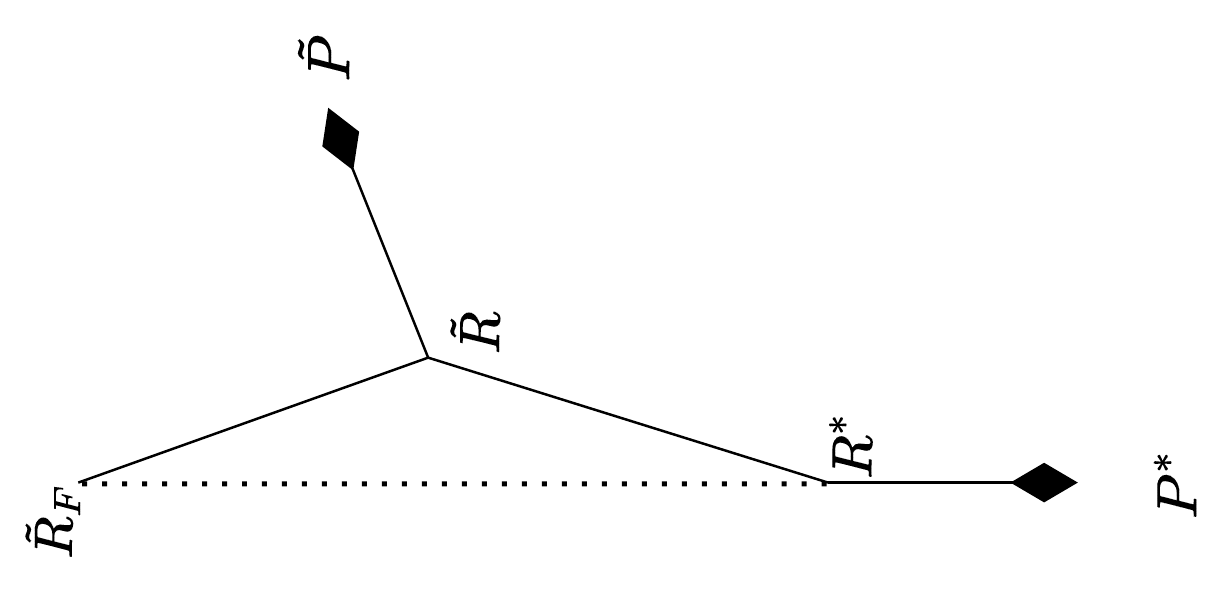}
\vskip -0.15in
\caption{Example in which enforcing algorithmic fairness harms performance in the target domain \emph{despite the Bayes decision rule in the target domain satisfying the algorithmic fairness constraint}. }
\label{fig:counterex}
\vskip -0.15in
\end{figure}

We start by showing that fair risk minimization may not improve overall performance. Without other stipulations, this is implied by a result similar to Theorem \ref{thm:RP-benefits-drawbacks} for more general risk-based notions of algorithmic fairness. Perhaps more surprising, is fair risk minimization may not improve overall performance \emph{even if the Bayes decision rule in the target domain is algorithmically fair}:
\[
\argmin_{R\in\cR}\langle P^*_{A, V},R\rangle \subseteq \cF_{\text{CRP}}.
\]
Figure \ref{fig:counterex} shows such a problem instance. The triangle is the set of risk profiles, and the dotted bottom of the triangle intersects the fair constraint (\ie\ the risk profiles on the dotted line are algorithmically fair). The training objective $\tP$ is chosen so that the risk profile of (unconstrained) risk minimizer on biased training data $\tR$ is the vertex on the top and the risk profile of fair risk minimizer (also on biased training data) $\tR_{\cF}$ is the vertex on left. The test objective points to the right, so points close to the right of the triangle have the best overall performance in the target domain. We see that $\tR$ is closer to the right of the triangle than $\tR_{\cF}$, which immediately implies $\langle P^*, \tR \rangle \le \langle P^*, \tR_{\cF} \rangle$, i.e. it has better performance in the target domain in comparison to $\tR_{\cF}$. This counterexample is not surprising: the assumption that $R^*$ is fair is a constraint on $P^*$, $\cR$, and $\cF$; it imposes no constraints on $\tP$. By picking $\tP$ adversarially, it is possible to have $\langle P^*,\tR\rangle \le \langle P^*,\tR_{\cF}\rangle$. 


\begin{wrapfigure}[23]{r}{0.45\textwidth}
 \vskip -0.45in
\centering
\centerline{\includegraphics[width=0.45\columnwidth]{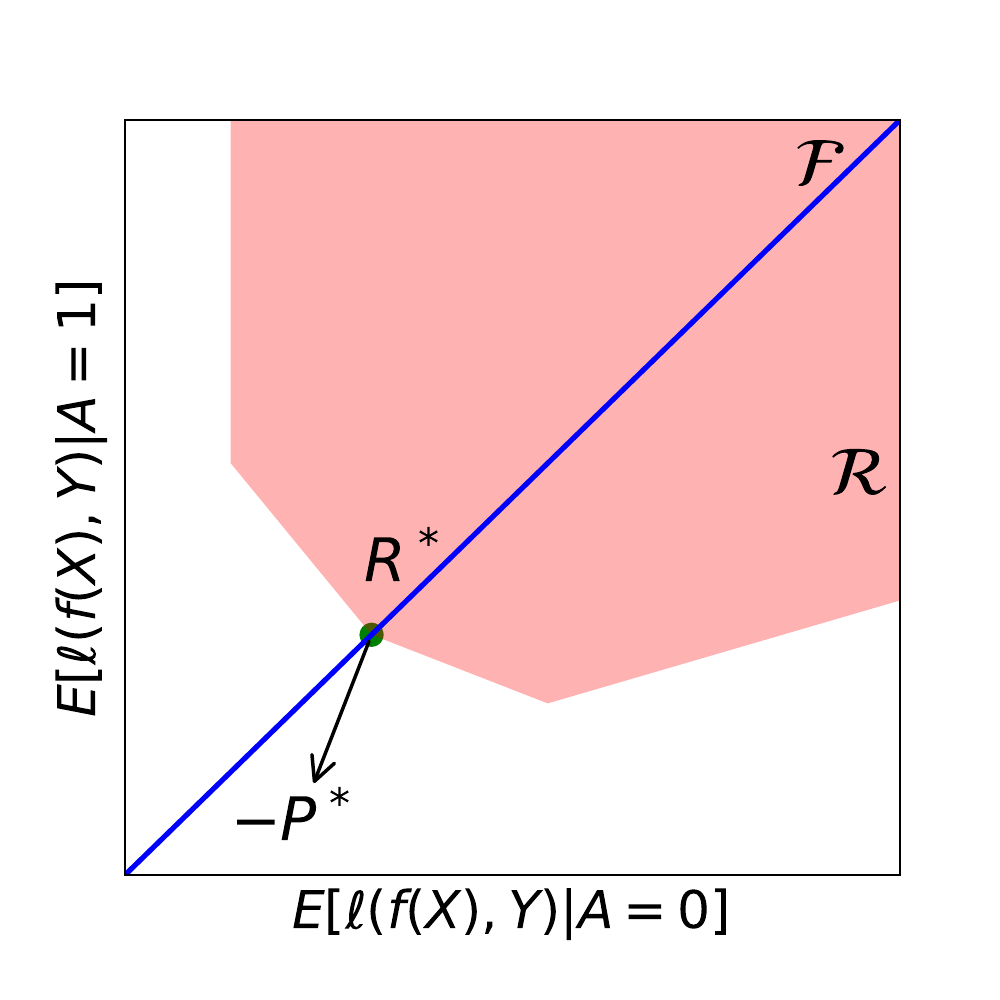}}
\vskip -0.2in
\caption{Total recovery from training bias by enforcing risk parity. In this example, the training bias $\tP - P^*$ is always orthogonal to the risk parity constraint (blue line) because $\tP$ and $P^*$ are probability distributions. When the training bias does not affect the risk profiles, enforcing risk parity allows us to totally overcome the training bias. Unfortunately, to show an example in which the risk decomposes into recoverable and non-recoverable parts, we need (at least) two more dimensions.}
\label{fig:risk_fig}
\end{wrapfigure}

\subsection{When does fair risk minimization improve overall performance} 
\label{sec:FRM-closes-performance-gaps}

The main result in this section provides necessary and sufficient conditions for recovering the unbiased Bayes' classifier with \eqref{eq:FRM}. As the unbiased Bayes' classifier is the (unconstrained) optimal classifier in the target domain, enforcing a risk-based notion of algorithmic fairness will improve overall performance in the target domain if it recovers the unbiased Bayes' classifier. 

At first blush, it is tempting to think that because the unbiased Bayes classifier satisfies CRP, then enforcing this constraint always increases accuracy, this is not the case as described in the previous paragraph. Our next theorem characterizes the precise condition under which it is possible to improve accuracy on the target domain by enforcing fairness constraint: 



\begin{theorem}
\label{thm:discrete} Under the assumptions
\begin{enumerate}
    \item The risk set $\cR$ is convex. 
    \item The risk profiles with respect to $\tP$ and $P^*$ are identical. 
    \item The unconstrained risk minimizer on unbiased data is algorithmically fair; \ie\ $\argmin_{R\in\cR}\langle P^*,R\rangle \subseteq \cF_{\text{CRP}}$. 
\end{enumerate}
the fair risk minimization \eqref{eq:FRM} obtains $h\in\cH$ such that $R(h) = R^*$ if and only if 
\begin{equation}
\Pi_{\cF}(P^*_{A,V} - \tP_{A,V}) - P^*_{A,V} \in \cN_{\cR}(R^*) + \cF^\perp_{\text{CRP}},
\label{eq:fair-ERM-recovery}
\end{equation}
where $P^*_{A,V}$ (resp. $\tP_{A, V}$) is the marginal of $P^*$ (resp. $\tP$) with respect to $(A, V)$, $R^*$ is the optimal risk profile with respect to $P^*$, $\cN_{\cR}(R^*)$ is the normal cone of $\cR$ at $R^*$ and $\Pi_{\cF}$ is the projection on the fair hyperplane. 
\end{theorem}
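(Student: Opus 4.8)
The plan is to read \eqref{eq:FRM} for what it is in risk-profile space: the minimization of the linear functional $R \mapsto \langle \tP_{A,V}, R\rangle$ over the convex feasible set $\cR \cap \cF_{\CRP}$. Abbreviate $\cF := \cF_{\CRP}$ and $\cF^{\perp} := \cF^{\perp}_{\CRP}$, and write $\tilde p := \tP_{A,V}$, $p^* := P^*_{A,V}$. The subpopulation-shift assumption (2) is what lets me use a single risk set $\cR$ and a single profile map $R(\cdot)$ for both domains, so that $R^* = \argmin_{R\in\cR}\langle p^*, R\rangle$ is a genuine, achievable risk profile; assumption (3) guarantees $R^* \in \cF$, and by definition $R^*\in\cR$, so $R^*$ is feasible for \eqref{eq:FRM}. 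Hence ``\eqref{eq:FRM} obtains $h$ with $R(h)=R^*$'' is equivalent to the assertion that $R^*$ minimizes $\langle \tilde p,\cdot\rangle$ over $\cR\cap\cF$.

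First I would record the first-order optimality condition for this convex program. Because the objective is linear, $R^*$ is optimal over $\cR\cap\cF$ if and only if the variational inequality $\langle \tilde p, R - R^*\rangle \ge 0$ holds for every $R\in\cR\cap\cF$, i.e. if and only if $-\tilde p \in \cN_{\cR\cap\cF}(R^*)$. The crux is then to pass from the normal cone of the intersection to the stated decomposition, by invoking the normal-cone sum rule $\cN_{\cR\cap\cF}(R^*) = \cN_{\cR}(R^*) + \cN_{\cF}(R^*)$ together with the elementary fact that for a linear subspace the normal cone at any point is the orthogonal complement, $\cN_{\cF}(R^*) = \cF^{\perp}$. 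This reduces the optimality condition to
\[
-\tilde p \in \cN_{\cR}(R^*) + \cF^{\perp}.
\]

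The easy inclusion $\cN_{\cR}(R^*) + \cF^{\perp} \subseteq \cN_{\cR\cap\cF}(R^*)$ holds unconditionally: for $n\in\cN_{\cR}(R^*)$, $f\in\cF^{\perp}$ and $R\in\cR\cap\cF$ one has $\langle n+f, R-R^*\rangle = \langle n, R-R^*\rangle \le 0$, since $R-R^*\in\cF$ kills the $f$ term; this already yields the ``if'' direction of the theorem. The reverse inclusion, needed for ``only if,'' is exactly where a constraint qualification enters, and I expect this to be the \emph{main obstacle}. Since $\cF$ is polyhedral (a subspace), it suffices to assume $\mathrm{ri}(\cR)\cap\cF \neq \emptyset$, i.e. that some fair risk profile lies in the relative interior of $\cR$; I would either take this mild regularity as a standing hypothesis or argue it from the solidity of $\cR$ (the randomized classifiers produced by the reductions approach make $\cR$ full-dimensional). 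I would also note in passing that $R^*$ is attained because $\cR$ is convex and closed/bounded, so that all the objects above are well defined.

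Finally I would check that $-\tilde p \in \cN_{\cR}(R^*) + \cF^{\perp}$ is literally condition \eqref{eq:fair-ERM-recovery}. Writing $\Pi_{\cF^{\perp}} = \mathrm{Id} - \Pi_{\cF}$ and using that the set $\cN_{\cR}(R^*) + \cF^{\perp}$ is invariant under adding elements of $\cF^{\perp}$, I compute
\[
\Pi_{\cF}(p^* - \tilde p) - p^* = -\Pi_{\cF}\tilde p - \Pi_{\cF^{\perp}} p^* = -\tilde p + \Pi_{\cF^{\perp}}\tilde p - \Pi_{\cF^{\perp}} p^*.
\]
The two trailing terms lie in $\cF^{\perp}$, so $\Pi_{\cF}(p^*-\tilde p) - p^*$ and $-\tilde p$ differ by an element of $\cF^{\perp}$; hence one belongs to $\cN_{\cR}(R^*) + \cF^{\perp}$ if and only if the other does. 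Chaining the three equivalences (optimality $\Leftrightarrow$ $-\tilde p\in\cN_{\cR\cap\cF}(R^*)$; sum rule $\Leftrightarrow$ $-\tilde p\in\cN_{\cR}(R^*)+\cF^{\perp}$; algebra $\Leftrightarrow$ \eqref{eq:fair-ERM-recovery}) then gives both directions, with the constraint qualification used only for ``only if.''
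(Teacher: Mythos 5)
Your proposal is correct and takes essentially the same route as the paper: both arguments reduce optimality of $R^*$ for \eqref{eq:FRM} to the first-order condition $-\tP_{A,V}\in\cN_{\cR}(R^*)+\cF^\perp_{\text{CRP}}$ and then observe that $\Pi_{\cF}(P^*_{A,V}-\tP_{A,V})-P^*_{A,V}$ differs from $-\tP_{A,V}$ only by an element of $\cF^\perp_{\text{CRP}}$, so membership in that (translation-invariant under $\cF^\perp_{\text{CRP}}$) set is unaffected. The one place you go beyond the paper is in flagging that the ``only if'' direction needs the normal-cone sum rule $\cN_{\cR\cap\cF}(R^*)=\cN_{\cR}(R^*)+\cF^\perp_{\text{CRP}}$, whose nontrivial inclusion requires a constraint qualification such as $\mathrm{ri}(\cR)\cap\cF_{\text{CRP}}\neq\emptyset$; the paper's proof assumes this implicitly when it asserts the existence of the dual certificate $\tZ$, so your explicit treatment is a genuine (and welcome) tightening rather than a different proof.
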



This assumption that $\cR$ is convex is innocuous because it is possible to convexify the risk set by considering \emph{randomized} decision rules. To evaluate a randomized decision rule $H$, we sample a decision rule $h$ from $H$ and evaluate $h$. It is not hard to see that the risk profiles of randomized decision rules are convex combinations of the risk profiles of (non-randomized) decision rules, so including randomized decision rules convexifies the risk set.
The third assumption is necessary for recovery of the unbiased Bayes classifier. If the unbiased Bayes classifier is not algorithmically fair, then there is no hope for \eqref{eq:FRM} to recover it as there will always be a  non-negligible bias term. This assumption is also implicit in large swaths of the algorithmic fairness literature. For example, \citet{buolamwini2018Gender} and \citet{yang2020towards} suggest collecting representative training data to improve the accuracy of computer vision systems on individuals from underrepresented demographic groups. This suggestion implicitly assumes the Bayes classifier on representative training data is algorithmically fair. 

Theorem \ref{thm:discrete} characterizes the biases in the training data from which can be completely removed by enforcing appropriate algorithmic fairness constraints. The key insight from this theorem is a decomposition of the training bias into two parts: a part orthogonal to the fair constraint and the remaining part in $\cN_{\cR}(R^*)$. Enforcing an appropriate risk-based notion of algorithmic fairness overcomes the first part of the bias. This occurs regardless of the magnitude of this part of the bias (see Corollary \ref{lem:suff_1}), which is also evident from our computational results. The second part of the bias (the part in $\cN_{\cR}(R^*)$) represents the ``natural'' robustness of $R^*$ to changes in $P^*$: if $\tP$ is in $\cN_{\cR}(R^*)$, then the unconstrained risk minimizer on training data remains $R^*$. The magnitude of the bias in this set cannot be too large, and enforcing algorithmic fairness constraints does not help overcome this part of the bias. Although we stated our main result only for finite-valued discriminative attributes for simplicity of exposition, please see Appendix 
\ref{sec:cont-disc-attr} 
for a more general version of Theorem \ref{thm:discrete} that applies to more general (including continuous-valued) discriminative attributes.


\begin{corollary}
\label{lem:suff_1}
A sufficient condition for \eqref{eq:fair-ERM-recovery} is $\tP_{A, V} - P^*_{A, V} \in \cF^{\perp}_{\text{CRP}}$. 
\end{corollary}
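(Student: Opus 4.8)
The plan is to directly verify the membership \eqref{eq:fair-ERM-recovery} by simplifying its left-hand side under the stated hypothesis, rather than reproving anything from Theorem \ref{thm:discrete}. Write $\delta \triangleq P^*_{A,V} - \tP_{A,V}$ for the signed training bias. Since $\cF^{\perp}_{\CRP}$ is a linear subspace, it is closed under negation, so the assumption $\tP_{A,V} - P^*_{A,V} \in \cF^{\perp}_{\CRP}$ is equivalent to $\delta \in \cF^{\perp}_{\CRP}$. The left-hand side of \eqref{eq:fair-ERM-recovery} is exactly $\Pi_{\cF}(\delta) - P^*_{A,V}$, so the whole argument reduces to understanding these two summands.

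First I would observe that $\Pi_{\cF}$ is the orthogonal projection onto the linear subspace $\cF_{\CRP}$, hence it annihilates the orthogonal complement; in particular $\Pi_{\cF}(\delta) = 0$ because $\delta \in \cF^{\perp}_{\CRP}$. This collapses the left-hand side of \eqref{eq:fair-ERM-recovery} to $-P^*_{A,V}$. Second, I would invoke the first-order optimality condition for the linear program defining $R^*$. Because $\cR$ is convex and $R^* \in \argmin_{R\in\cR}\langle P^*_{A,V}, R\rangle$, we have $\langle P^*_{A,V}, R - R^*\rangle \ge 0$ for every $R \in \cR$; equivalently $\langle -P^*_{A,V}, R - R^*\rangle \le 0$ for all $R\in\cR$, which is precisely the statement $-P^*_{A,V} \in \cN_{\cR}(R^*)$ by the definition of the normal cone.

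Combining the two observations, the simplified left-hand side $-P^*_{A,V}$ already lies in $\cN_{\cR}(R^*)$, and since $0 \in \cF^{\perp}_{\CRP}$ it lies \emph{a fortiori} in $\cN_{\cR}(R^*) + \cF^{\perp}_{\CRP}$. This establishes \eqref{eq:fair-ERM-recovery}, and hence by Theorem \ref{thm:discrete} the fair risk minimization \eqref{eq:FRM} recovers $R^*$. I do not expect a genuine obstacle here: the only two points requiring any care are that $\cF^{\perp}_{\CRP}$ is truly a subspace (so that the hypothesis coincides with its negation and the projection kills it) and that the linearity of the objective $\langle P^*_{A,V}, \cdot\rangle$ converts optimality of $R^*$ into normal-cone membership. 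Both are immediate from the convexity of $\cR$ and the definitions, so the corollary should follow in a few lines.
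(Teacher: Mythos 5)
Your proposal is correct and follows essentially the same route as the paper's own proof: the hypothesis kills the projection term $\Pi_{\cF}(P^*_{A,V}-\tP_{A,V})$, and then first-order optimality of $R^*$ for the linear objective $\langle P^*_{A,V},\cdot\rangle$ over the convex set $\cR$ gives $-P^*_{A,V}\in\cN_{\cR}(R^*)$, which suffices since $0\in\cF^{\perp}_{\CRP}$. No gaps.
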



Corollary \ref{lem:suff_1} allows large differences between $\tP_{A,V}$ and its unbiased counterpart $P^*_{A,V}$, as long as the differences are confined to $\cF^\perp$. Intuitively, \eqref{eq:FRM} enables practitioners to recover from large biases in $\cF^\perp$ because the algorithmic fairness constraint ``soaks up'' any component of $\tP_{A,V}$ in $\cF^\perp$. We explore the implications of Corollary \ref{lem:suff_1} for risk parity and CRP.

\textbf{Risk Parity:} For RP, $V$ is trivial random variable, hence $\tP_{A} - P^*_{A} \in \cF_{RP}^{\perp}$ means that it has mean 0. This is true for any $\tP_A$ as $\langle P^*_A , 1 \rangle = \langle \tP_A, 1 \rangle = 1$. Hence, the Bayes' classifier can be recovered under any perturbation. More specifically, recall the example of women historically underrepresented in STEM fields mentioned in the Introduction. Such training data is biased in its gender representation which differs at test time where women are better represented. Classifiers trained on biased data with the risk Parity fairness constraint will generalize better at test time.

\textbf{Conditional risk parity:} In this case $V = Y$ and the condition $\tP_{ A, Y} - P^*_{ A, Y} \in \cF_{CRP}^{\perp}$ implies that the sum of each column of $\tP_{A, Y} - P^*_{A, Y}$ must be 0. Hence, to recover the Bayes classifier under equalized odds fairness constraints, we are allowed to perturb  $P^*_{A, Y}$ in such a way, that they have the same column sums: i.e. for any label, we are allowed to perturb the distribution of protected attributes for that label, but we have to keep the marginal distribution of the label to be same for both $\tP_{ A, Y}$ and $P^*_{A, Y}$. 

In practice, it is unlikely that the training bias is exactly orthogonal to the fair constraint, which happens only if the second part of the bias (i.e. the part in the normal cone at $R^*$) is small enough. Theorem \ref{thm:discrete} provides a general characterization along with a precise notion of this ``small enough'' condition. 

\begin{remark}
Theorem 4.3 can further be generalized for any discrete/continuous $A$ and $V$
(as defined in (4.1), the proof for the continuous case can be found in the supplementary document). Thus, our theory applies to many fairness constraints which fall under the setup in Equation (4.1), where $V$ can be any discriminative attribute. However, our conditions do not cover calibration where one conditions on the model outcome $\hat Y$. 
\end{remark}

\subsection{Related work}

Most of the prior works on algorithmic fairness assume fairness is an intrinsically desirable property of an ML model, but this assumption is unrealistic in practice \citep{agarwal2018Reductions,cotter2019Optimization,yurochkin2020Training}. There is a small but growing line of work on how enforcing fairness helps ML models recover from bias in the training data. \citet{kleinberg2018Selection,celis2020Interventions} consider strategies for correcting biases in hiring processes. They show that correcting the biases not only increases the fraction of successful applicants from the minority group but also boosts the quality of successful applicants. \citet{dutta2019InformationTheoretic} study the accuracy-fairness trade-off in binary classification in terms of the separation of the classes within the protected groups. They explain the accuracy-fairness trade-off in terms of this separation and propose a way of achieving fairness without compromising separation by collecting more features.

\citet{blum2019Recovering} study how common group fairness criteria help binary classification models recover from bias in the training data. In particular, they show that the equal opportunity criteria \citep{hardt2016Equality} recovers the Bayes classifier despite under-representation and labeling biases in the training data. Our results complement theirs. Instead of comparing the effects of enforcing various fairness criteria on training data with two types of biases, we characterize the types of biases that the fairness criteria help overcome. Our results also reveal the geometric underpinnings of the constants that arise in \citeauthor{blum2019Recovering}'s results. Three other differences between our results and theirs are: (i) they only consider binary classification, while we consider all ML tasks that boil down to risk minimization, (ii) they allow some form of posterior drift (so the risk profiles of the models in $\cH$ with respect to $P^*$ and $\tP$ may differ in some ways), but only permit marginal drift in the label ($V = Y$), (iii) their conditions are sufficient for recovery of the fair Bayes decision rule (in their setting), while our conditions are also necessary (in our setting).


\section{Computational results}
\label{sec:experiments}

We verify the theoretical findings of the paper empirically. Our goal is to show that an algorithm trained with fairness constraints on the biased train data $\tP$ achieves superior performance on the true data generating $P^*$ at test time in comparison to an algorithm trained without fairness considerations.

There are several algorithms in the literature that offer the functionality of empirical risk minimization subject to various fairness constraints, e.g. \citet{cotter2019Optimization} and \citet{agarwal2018Reductions}. Any such algorithm will suffice to verify our theory. In our experiments we use Reductions fair classification algorithm \citep{agarwal2018Reductions} with logistic regression as the base classifier. For the fairness constraint we consider Equalized Odds \citep{hardt2016Equality} (EO) --- one of the major and more nuanced fairness definitions. We refer to Reductions algorithm trained with loose EO violation constraint as baseline and Reductions trained with tight EO violation constraint as fair classifier (please see Appendix 
\ref{sec:supp:experiments} 
for additional details and supplementary material for the code).

\begin{wrapfigure}[15]{r}{0.4\textwidth}
 \vspace{.05in}
\centering
\includegraphics[width=\linewidth]{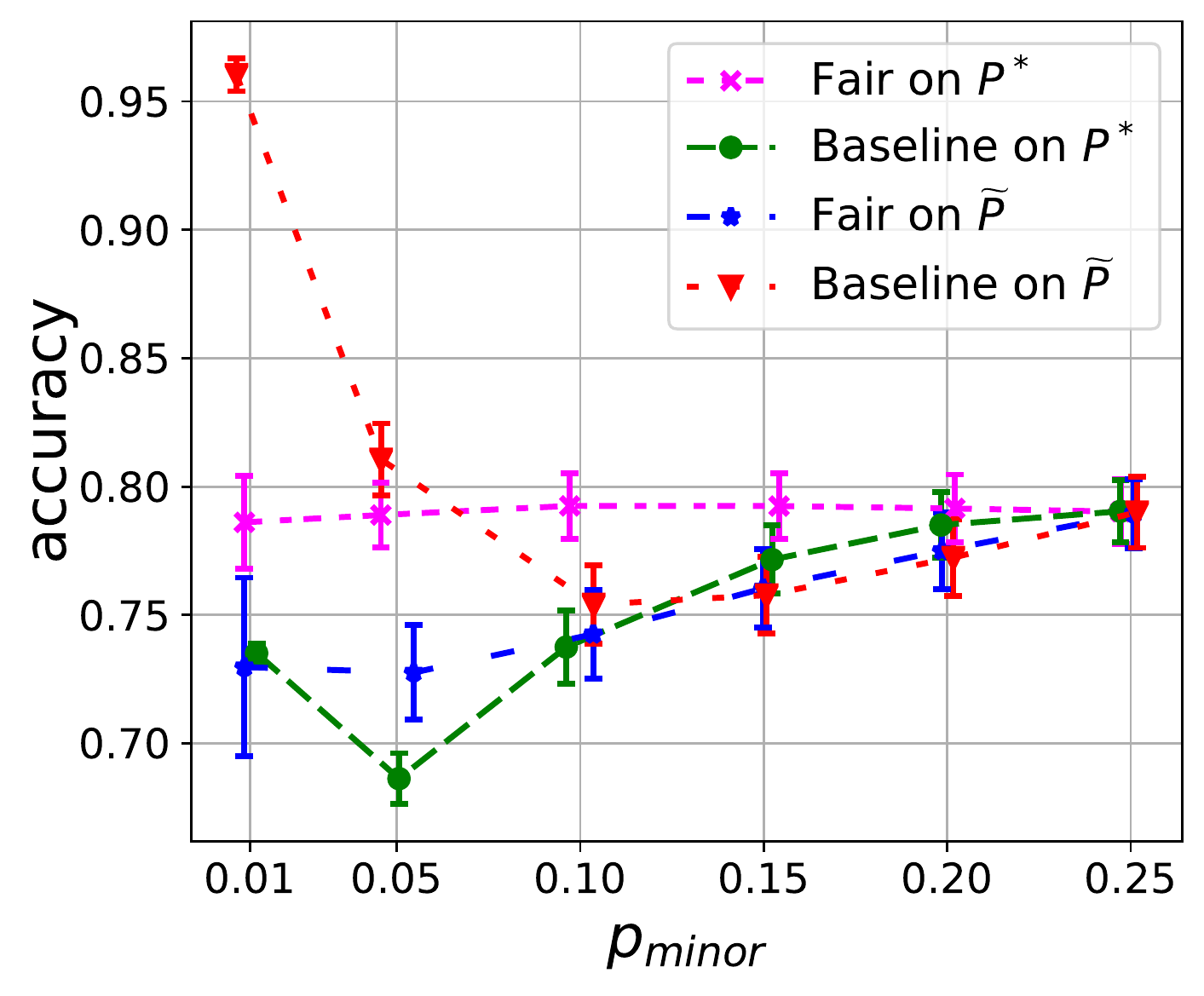}
\vspace{-.3in}
\caption{Test acc. on $P^*$ and $\tP$ when trained on the (biased) data from $\tP$.}
\label{fig:simulations}
\end{wrapfigure}

\textbf{Simulations.}

We first verify the implications of Corollary \ref{lem:suff_1} using simulation studies. We follow the Conditional risk parity scenario from Section \ref{sec:CRP}. Specifically, consider a binary classification problem with two protected groups, i.e. $Y \in \{0,1\}$ and $A \in \{0,1\}$. We set $P^*$ to have equal representation of protected groups conditioned on the label and biased data $\tP$ to have one of the protected groups underrepresented. Specifically, let $p_{ay} = P_{A=a, Y=y}$, i.e. the $a,y$ indexed element of $P_{A,Y}$; $p_{ay}=0.25$ $\forall a,\,y$ for $P^*$ and $p_{1y} = p_{minor}$, $p_{0y} = p_{major} = 0.5 - p_{minor}$ for $\tP$. For both $P^*$ and $\tP$ we fix class marginals $p_{\cdot0} = p_{\cdot1} = 0.5$ and generate Gaussian features $X | A=a, Y=y \sim \mathcal{N}(\mu_{ay}, \Sigma_{ay})$ in 2-dimensions (see additional data generating details in 
Appendix C). 
In Figure \ref{fig:simul_decision} we show a qualitative example of simulated train data from $\tP$ with $p_{minor} = 0.1$ and test data from $P^*$, and the corresponding decision boundaries of a baseline classifier and a classifier trained with the Equalized Odds fairness constraint (irregularities in the decision heatmaps are due to stochasticity in the Reductions prediction rule). In this example fair classifier is \emph{3\% more accurate} on the test data and 1\% less accurate on a biased test data sampled from $\tP$ (latter not shown in the figure)

We proceed with a quantitative study by varying degree of bias in $\tP$ via changing $p_{minor}$ in $[0.01,0.25]$ and comparing performance of the baseline and fair classifier on test data from $P^*$ and $\tP$. We present results over 100 runs of the experiment in Figure \ref{fig:simulations}. Notice that the sum of each column of $\tP_{A, Y} - P^*_{A, Y}$ is 0 for any value of $p_{minor}$ and we observe that the fair classifier has almost constant accuracy on $P^*$ (consistently outperforming the baseline), as predicted by Corollary \ref{lem:suff_1}. The largest bias in the training data corresponds to $p_{minor} = 0.01$, where baseline is erroneous on the whole $a=1,y=0$ subgroup (cf. Figure \ref{fig:simul_decision}) resulting in close to 75\% accuracy corresponding to the remaining 3 (out of 4) subgroups. For $p_{minor}=0.05$ minority group acts as outliers causing additional errors at test time resulting in the worst performance overall. When $p_{minor}=0.25$, $\tP = P^*$ and all methods perform the same as expected. Results on $\tP$ correspond to the case where test data follows same distribution as train data, often assumed in the literature: here baseline can outperform fair classifier under the more extreme sampling bias conditions, i.e. $p_{minor} \leq 0.1$. We note that as the society moves towards eliminating injustice, we expect test data in practice to be closer to $P^*$ rather then replicating biases of the historical train data $\tP$.


 \begin{figure*}
\vskip 0.2in
\begin{center}
\centerline{\includegraphics[width=\textwidth]{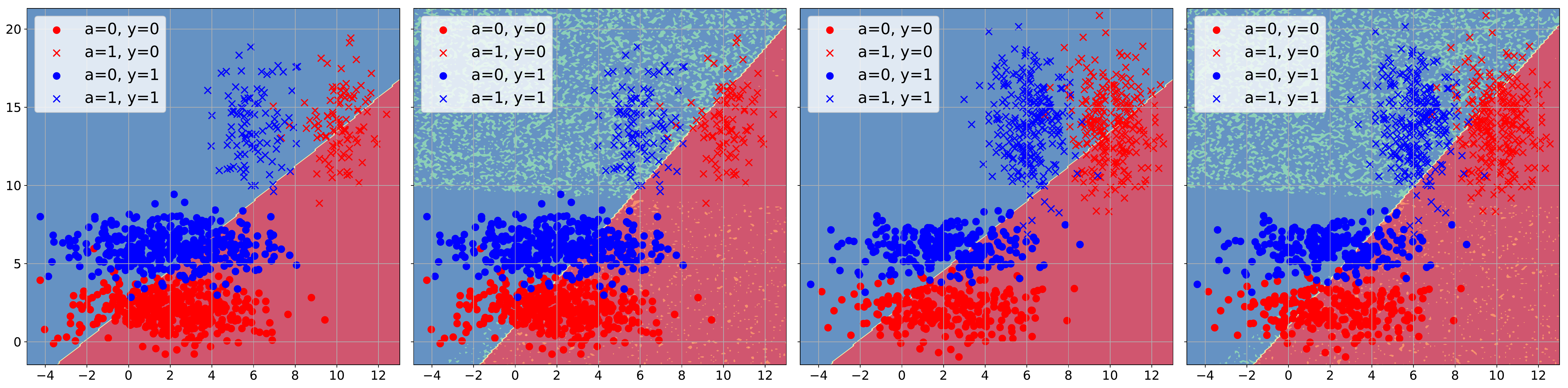}}
\caption{Decision heatmaps for (left) baseline on train data from $\tP$; (center left) fair classifier on train data from $\tP$; (center right) baseline on test data from $P^*$; (right) fair classifier on test data from $P^*$. Decision boundary of the fair classifier has larger slope better accounting for the group $a=1$ underrepresented in the train data. Consequently its performance is better on the unbiased test data.}
\label{fig:simul_decision}
\end{center}
\vskip -0.2in
\end{figure*}

\begin{wraptable}{r}{0.5\textwidth}
\vskip -0.15in
 \caption{Accuracy on COMPAS data}
\label{tab:compas}
\begin{center}
\begin{small}
\begin{sc}
\begin{tabular}{lcc}
\toprule
{} &           Test on $P^*$ &      Test on $\tP$ \\
\midrule
Fair & \textbf{0.652}$\pm$0.013 &      0.660$\pm$0.009 \\
Baseline &      0.634$\pm$0.011 & \textbf{0.668}$\pm$0.010 \\
\bottomrule
\end{tabular}
\end{sc}
\end{small}
\end{center}
\end{wraptable}

\textbf{Recidivism prediction on COMPAS data.} We verify that our theoretical findings continue to apply on real data. We train baseline and fair classifier on COMPAS dataset \citep{angwin2016Machine}. There are two binary protected attributes, Gender (male and female) and Race (white and non-white), resulting in 4 protected groups $A \in \{0,1,2,3\}$. The task is to predict if a defendant will re-offend, i.e. $Y \in \{0,1\}$. We repeat the experiment 100 times, each time splitting the data into identically distributed 70-30 train-test split, i.e. $\tP$ for train and test, and obtaining test set from $P^*$ by subsampling test data to preserve $Y$ marginals and enforcing equal representation at each of the 4 levels of the protected attribute $A$. Equal representation of the protected groups in $P^*$ is sufficient for satisfying the assumption 3 of Theorem \ref{thm:discrete} under an additional condition that noise levels are similar across protected groups. For Conditional Risk Parity, condition in eq. \eqref{eq:fair-ERM-recovery} of Theorem \ref{thm:discrete} is satisfied as long as $\tP$ and $P^*$ have the same $Y$ marginals. Thus, we expect that enforcing EO will improve test accuracy on $P^*$ in this experiment.

We present results in Table \ref{tab:compas}. We see that our theory holds in practice: \emph{accuracy of the fair classifier is 1.8\% higher} on $P^*$. Baseline is expectedly more accurate on the biased test data from $\tP$, but only by 0.8\%. We present results for the same experimental setup on the Adult dataset \citep{bache2013UCI} in Table 
2
in Appendix 
C.
We observe same pattern: in comparison to the baseline, fair classifier increases accuracy on $P^*$, but is worse on the biased test data from $\tP$.

\section{Summary and discussion}
\label{sec:summary}

In this paper, we studied the efficacy of enforcing common risk-based notions of algorithmic fairness in a subpopulation shift setting. This study is motivated by a myriad of examples in which algorithmic biases are traced back to subpopulations shifts in the training data (see \cite{buolamwini2018Gender,wilson2019Predictive}). We show that enforcing risk-based notions of algorithmic fairness may harm or improve the performance of the trained model in the target domain. Our theoretical results precisely characterize when fair risk minimization harms and improves model performance. Practitioners should be careful and actually check that enforcing fairness is improving model performance in the target domain. For example, consider the Gender Shades \cite{buolamwini2018Gender} study which shows that the commercial gender classification algorithms are less accurate on dark-skinned individuals. A practitioner may attempt to mitigate this algorithmic bias by enforcing a RP, which leads to a fair model that sacrifices some performance on lighter-skinned individuals in exchange for improved accuracy on darker skinned individuals. Taking a step back, one of the main takeaways of our study is by considering whether enforcing a particular notion of algorithmic fairness improves model performance in the target domain, it is possible to compare algorithmic fairness practices in a purely statistical way. We hope this alleviates one of the barriers to broader adoption of algorithmic fairness practices: it is often unclear which fairness definition to enforce.

\begin{ack}
This note is based upon work supported by the National Science Foundation (NSF) under grants no.\ 1916271, 2027737, and 2113373. Any opinions, findings, and conclusions or recommendations expressed in this note are those of the authors and do not necessarily reflect the views of the NSF.
\end{ack}

\bibliography{YK,refs}

\begin{thebibliography}{27}
\providecommand{\natexlab}[1]{#1}
\providecommand{\url}[1]{\texttt{#1}}
\expandafter\ifx\csname urlstyle\endcsname\relax
  \providecommand{\doi}[1]{doi: #1}\else
  \providecommand{\doi}{doi: \begingroup \urlstyle{rm}\Url}\fi

\bibitem[Agarwal et~al.(2018)Agarwal, Beygelzimer, Dud{\'i}k, Langford, and
  Wallach]{agarwal2018Reductions}
Alekh Agarwal, Alina Beygelzimer, Miroslav Dud{\'i}k, John Langford, and Hanna
  Wallach.
\newblock A {{Reductions Approach}} to {{Fair Classification}}.
\newblock \emph{arXiv:1803.02453 [cs]}, July 2018.

\bibitem[Agarwal et~al.(2019)Agarwal, Dud{\'i}k, and Wu]{agarwal2019Fair}
Alekh Agarwal, Miroslav Dud{\'i}k, and Zhiwei~Steven Wu.
\newblock Fair {{Regression}}: {{Quantitative Definitions}} and
  {{Reduction}}-based {{Algorithms}}.
\newblock \emph{arXiv:1905.12843 [cs, stat]}, May 2019.

\bibitem[Angwin et~al.(2016)Angwin, Larson, Mattu, and
  Kirchner]{angwin2016Machine}
Julia Angwin, Jeff Larson, Surya Mattu, and Lauren Kirchner.
\newblock Machine {{Bias}}.
\newblock
  www.propublica.org/article/machine-bias-risk-assessments-in-criminal-sentencing,
  May 2016.

\bibitem[Bache and Lichman(2013)]{bache2013UCI}
K.~Bache and M.~Lichman.
\newblock {{UCI}} machine learning repository.
\newblock 2013.

\bibitem[Berk et~al.(2017)Berk, Heidari, Jabbari, Kearns, and
  Roth]{berk2017Fairness}
Richard Berk, Hoda Heidari, Shahin Jabbari, Michael Kearns, and Aaron Roth.
\newblock Fairness in {{Criminal Justice Risk Assessments}}: {{The State}} of
  the {{Art}}.
\newblock \emph{arXiv:1703.09207 [stat]}, March 2017.

\bibitem[Blum and Stangl(2019)]{blum2019Recovering}
Avrim Blum and Kevin Stangl.
\newblock Recovering from {{Biased Data}}: {{Can Fairness Constraints Improve
  Accuracy}}?
\newblock \emph{arXiv:1912.01094 [cs, stat]}, December 2019.

\bibitem[Bonnans and Shapiro(2000)]{bonnans2000Perturbation}
Joseph~Fr{\'e}d{\'e}ric Bonnans and Alexander Shapiro.
\newblock \emph{Perturbation Analysis of Optimization Problems}.
\newblock Springer Series in Operations Research. {Springer}, {New York, NY},
  2000.
\newblock ISBN 978-1-4612-7129-1 978-0-387-98705-7.

\bibitem[Buolamwini and Gebru(2018)]{buolamwini2018Gender}
Joy Buolamwini and Timnit Gebru.
\newblock Gender {{Shades}}: {{Intersectional Accuracy Disparities}} in
  {{Commercial Gender Classification}}.
\newblock In \emph{Proceedings of {{Machine Learning Research}}}, volume~87,
  pages 77--91, 2018.

\bibitem[Celis et~al.(2020)Celis, Mehrotra, and
  Vishnoi]{celis2020Interventions}
L.~Elisa Celis, Anay Mehrotra, and Nisheeth~K. Vishnoi.
\newblock Interventions for ranking in the presence of implicit bias.
\newblock In \emph{Proceedings of the 2020 {{Conference}} on {{Fairness}},
  {{Accountability}}, and {{Transparency}}}, {{FAT}}* '20, pages 369--380, {New
  York, NY, USA}, January 2020. {Association for Computing Machinery}.
\newblock ISBN 978-1-4503-6936-7.
\newblock \doi{10.1145/3351095.3372858}.

\bibitem[Chouldechova(2017)]{chouldechova2017Fair}
Alexandra Chouldechova.
\newblock Fair prediction with disparate impact: {{A}} study of bias in
  recidivism prediction instruments.
\newblock \emph{arXiv:1703.00056 [cs, stat]}, February 2017.

\bibitem[Cotter et~al.(2018)Cotter, Gupta, Jiang, Srebro, Sridharan, Wang,
  Woodworth, and You]{cotter2018Training}
Andrew Cotter, Maya Gupta, Heinrich Jiang, Nathan Srebro, Karthik Sridharan,
  Serena Wang, Blake Woodworth, and Seungil You.
\newblock Training {{Well}}-{{Generalizing Classifiers}} for {{Fairness
  Metrics}} and {{Other Data}}-{{Dependent Constraints}}.
\newblock \emph{arXiv:1807.00028 [cs, stat]}, September 2018.

\bibitem[Cotter et~al.(2019)Cotter, Jiang, Gupta, Wang, Narayan, You, and
  Sridharan]{cotter2019Optimization}
Andrew Cotter, Heinrich Jiang, Maya Gupta, Serena Wang, Taman Narayan, Seungil
  You, and Karthik Sridharan.
\newblock Optimization with {{Non}}-{{Differentiable Constraints}} with
  {{Applications}} to {{Fairness}}, {{Recall}}, {{Churn}}, and {{Other Goals}}.
\newblock \emph{Journal of Machine Learning Research}, 20\penalty0
  (172):\penalty0 1--59, 2019.
\newblock ISSN 1533-7928.

\bibitem[Donini et~al.(2018)Donini, Oneto, {Ben-David}, {Shawe-Taylor}, and
  Pontil]{donini2018Empirical}
Michele Donini, Luca Oneto, Shai {Ben-David}, John~S. {Shawe-Taylor}, and
  Massimiliano Pontil.
\newblock Empirical {{Risk Minimization Under Fairness Constraints}}.
\newblock \emph{Advances in Neural Information Processing Systems},
  31:\penalty0 2791--2801, 2018.

\bibitem[Dunford et~al.(1958)Dunford, Schwartz, Bade, and
  Bartle]{dunford1958Linear}
Nelson Dunford, Jacob~T. Schwartz, William~G. Bade, and Robert~G. Bartle.
\newblock \emph{Linear {{Operators}}}.
\newblock {Interscience Publishers}, 1958.

\bibitem[Dutta et~al.(2019)Dutta, Wei, Yueksel, Chen, Liu, and
  Varshney]{dutta2019InformationTheoretic}
Sanghamitra Dutta, Dennis Wei, Hazar Yueksel, Pin-Yu Chen, Sijia Liu, and
  Kush~R. Varshney.
\newblock An {{Information}}-{{Theoretic Perspective}} on the {{Relationship
  Between Fairness}} and {{Accuracy}}.
\newblock \emph{arXiv:1910.07870 [cs, math, stat]}, October 2019.

\bibitem[Hardt et~al.(2016)Hardt, Price, and Srebro]{hardt2016Equality}
Moritz Hardt, Eric Price, and Nathan Srebro.
\newblock Equality of {{Opportunity}} in {{Supervised Learning}}.
\newblock \emph{arXiv:1610.02413 [cs]}, October 2016.

\bibitem[Kleinberg and Raghavan(2018)]{kleinberg2018Selection}
Jon Kleinberg and Manish Raghavan.
\newblock Selection {{Problems}} in the {{Presence}} of {{Implicit Bias}}.
\newblock January 2018.

\bibitem[Koh et~al.(2020)Koh, Sagawa, Marklund, Xie, Zhang, Balsubramani, Hu,
  Yasunaga, Phillips, Beery, Leskovec, Kundaje, Pierson, Levine, Finn, and
  Liang]{koh2020WILDS}
Pang~Wei Koh, Shiori Sagawa, Henrik Marklund, Sang~Michael Xie, Marvin Zhang,
  Akshay Balsubramani, Weihua Hu, Michihiro Yasunaga, Richard~Lanas Phillips,
  Sara Beery, Jure Leskovec, Anshul Kundaje, Emma Pierson, Sergey Levine,
  Chelsea Finn, and Percy Liang.
\newblock {{WILDS}}: {{A Benchmark}} of in-the-{{Wild Distribution Shifts}}.
\newblock \emph{arXiv:2012.07421 [cs]}, December 2020.

\bibitem[Obermeyer et~al.(2019)Obermeyer, Powers, Vogeli, and
  Mullainathan]{obermeyer2019Dissecting}
Ziad Obermeyer, Brian Powers, Christine Vogeli, and Sendhil Mullainathan.
\newblock Dissecting racial bias in an algorithm used to manage the health of
  populations.
\newblock \emph{Science}, 366\penalty0 (6464):\penalty0 447--453, October 2019.
\newblock ISSN 0036-8075, 1095-9203.
\newblock \doi{10.1126/science.aax2342}.

\bibitem[Ritov et~al.(2017)Ritov, Sun, and Zhao]{ritov2017conditional}
Ya'acov Ritov, Yuekai Sun, and Ruofei Zhao.
\newblock On conditional parity as a notion of non-discrimination in machine
  learning.
\newblock \emph{arXiv:1706.08519 [cs, stat]}, June 2017.

\bibitem[Santurkar et~al.(2021)Santurkar, Tsipras, and
  Madry]{santurkar2021breeds}
Shibani Santurkar, Dimitris Tsipras, and Aleksander Madry.
\newblock {\{}BREEDS{\}}: Benchmarks for subpopulation shift.
\newblock In \emph{International Conference on Learning Representations}, 2021.
\newblock URL \url{https://openreview.net/forum?id=mQPBmvyAuk}.

\bibitem[Shankar et~al.(2017)Shankar, Halpern, Breck, Atwood, Wilson, and
  Sculley]{shankar2017No}
Shreya Shankar, Yoni Halpern, Eric Breck, James Atwood, Jimbo Wilson, and
  D.~Sculley.
\newblock No {{Classification}} without {{Representation}}: {{Assessing
  Geodiversity Issues}} in {{Open Data Sets}} for the {{Developing World}}.
\newblock \emph{arXiv:1711.08536 [stat]}, November 2017.

\bibitem[Simonite(2019)]{simonite2019Best}
Tom Simonite.
\newblock The {{Best Algorithms Still Struggle}} to {{Recognize Black Faces}}.
\newblock \emph{Wired}, July 2019.
\newblock ISSN 1059-1028.

\bibitem[Wilson et~al.(2019)Wilson, Hoffman, and
  Morgenstern]{wilson2019Predictive}
Benjamin Wilson, Judy Hoffman, and Jamie Morgenstern.
\newblock Predictive {{Inequity}} in {{Object Detection}}.
\newblock \emph{arXiv:1902.11097 [cs, stat]}, February 2019.

\bibitem[Yang et~al.(2020)Yang, Qinami, Fei-Fei, Deng, and
  Russakovsky]{yang2020towards}
Kaiyu Yang, Klint Qinami, Li~Fei-Fei, Jia Deng, and Olga Russakovsky.
\newblock Towards fairer datasets: Filtering and balancing the distribution of
  the people subtree in the imagenet hierarchy.
\newblock In \emph{Proceedings of the 2020 Conference on Fairness,
  Accountability, and Transparency}, pages 547--558, 2020.

\bibitem[Yeom and Tschantz(2019)]{yeom2019Discriminative}
Samuel Yeom and Michael~Carl Tschantz.
\newblock Discriminative but {{Not Discriminatory}}: {{A Comparison}} of
  {{Fairness Definitions}} under {{Different Worldviews}}.
\newblock \emph{arXiv:1808.08619 [cs, stat]}, September 2019.

\bibitem[Yurochkin et~al.(2020)Yurochkin, Bower, and
  Sun]{yurochkin2020Training}
Mikhail Yurochkin, Amanda Bower, and Yuekai Sun.
\newblock Training individually fair {{ML}} models with sensitive subspace
  robustness.
\newblock In \emph{International {{Conference}} on {{Learning
  Representations}}}, {Addis Ababa, Ethiopia}, 2020.

\end{thebibliography}
\bibliographystyle{plainnat}
\newpage
 \appendix



\section{Supplementary proofs}
\label{sec:proof}

\paragraph{\textbf{Theorem 3.2.}}
\textit{
Without loss of generality, let first entry of $\tP_A$ be the fraction of samples from the majority group in the training data. Assume
\begin{enumerate}
\item there are only two groups and the set of risk profiles $\cR\subseteq \reals^2$;
\item subpopulation shift: the risk profiles with respect to $\tP$ and $P^*$ are identical;
\item $(\tR_1,\tR_0) = \tR  \triangleq \argmin_{R\in\cR}\langle\tP_A,R\rangle$ is the risk profile of the risk minimizer;
\item $((\tR_\cF)_1,(\tR_\cF)_0) = \tR_\cF  \triangleq \argmin_{R\in\cR\cap \cF}\langle\tP_A,R\rangle$ is the risk profile of the fair risk minimizer.
\end{enumerate}
Then we have: 
$$
\langle P_A^*,\tR\rangle \ 
\begin{cases}
\le \langle P_A^*,\tR_{\cF}\rangle \, & \text{ if } P^*(A = 1) \ge \frac{\tR_0 - (\tR_{\cF})_0}{\tR_0-\tR_1} \\
\ge \langle P_A^*,\tR_{\cF}\rangle \, & \text{ otherwise} \,.
\end{cases}
$$
Therefore, enforcing DP harms overall performance in the target domain in the first case, while improves in the second. }

\begin{proof}[Proof of Theorem \ref{thm:RP-benefits-drawbacks}]
We start by simplifying $\langle P^*_A, \tR - \tR_\cF\rangle.$ Note that \[
\begin{aligned}
\langle P^*_A, \tR - \tR_\cF\rangle = & P^*(A = 0) \Big[(\tR)_0 - (\tR_\cF)_0\Big] + P^*(A = 1) \Big[(\tR)_1 - (\tR_\cF)_1\Big]\\
= & (1-P^*(A = 1)) \Big[(\tR)_0 - (\tR_\cF)_0\Big] + P^*(A = 1) \Big[(\tR)_1 - (\tR_\cF)_1\Big]\\
=& (\tR)_0 - (\tR_\cF)_0 - P^*(A = 1) \Big[(\tR)_0 - (\tR_\cF)_0 - (\tR)_1 + (\tR_\cF)_1\Big]\\
=& (\tR)_0 - (\tR_\cF)_0 - P^*(A = 1) \Big[(\tR)_0 - (\tR)_1\Big], \quad \text{since } (\tR_\cF)_0 =  (\tR_\cF)_1.
\end{aligned}
\]

Finally, we conclude
\[\langle P^*_A, \tR - \tR_\cF\rangle \le 0 \quad \text{if and only if} \quad  P^*(A = 1) \ge \frac{(\tR)_0 - (\tR_\cF)_0}{(\tR)_0 - (\tR)_1}.  \]

\end{proof}

Next we provide a proof of Theorem \ref{thm:discrete} under the additional assumption that $\cA$ and $\cV$ are finite sets. Although less general, we feel that this proof is more instructive because it suggests the origin of \eqref{eq:fair-ERM-recovery}.

\paragraph{\textbf{Theorem 4.3.}}
\textit{Under the assumptions
\begin{enumerate}
    \item The risk set $\cR$ is convex. 
    \item The risk profiles with respect to $\tP$ and $P^*$ are identical. 
    \item The unconstrained risk minimizer on unbiased data is algorithmically fair; \ie\ $\argmin_{R\in\cR}\langle P^*,R\rangle \subseteq \cF_\CRP$. 
\end{enumerate}
the fair risk minimization \eqref{eq:FRM} obtains $h\in\cH$ such that $R(h) = R^*$ if and only if 
\begin{equation}
\Pi_{\cF}(P^*_{A,V} - \tP_{A,V}) - P^*_{A,V} \in \cN_{\cR}(R^*) + \cF_\CRP ^\perp.
\end{equation}
where $P^*_{A,V}$ (resp. $\tP_{A, V}$) is the marginal of $P^*$ (resp. $\tP$) with respect to $(A, V)$, $R^*$ is the optimal risk profile with respect to $P^*$, $\cN_{\cR}(R^*)$ is the normal cone of $\cR$ at $R^*$ and $\Pi_{\cF}$ is the projection on the fair hyperplane. 
}

\begin{proof}[Proof of Theorem \ref{thm:discrete}]
\textbf{``if'' direction:} Let $\tZ = -\tP_{A, V}$. If \eqref{eq:fair-ERM-recovery}, then it is not hard to check that $(R^*,\tZ)$ satisfies the optimality conditions of  \eqref{eq:FRM}:
\begin{equation}
\begin{aligned}
0 = \tP_{A, V} + \tZ,\quad\text{(stationarity)}\\
R^*\in\cF,\quad\text{(primal feasibility)} \\
\tZ\in\cN_{\cR}(R^*) + \cF^\perp\quad\text{(dual feasibility)}.
\end{aligned}
\label{eq:FRM-optimality}
\end{equation}
Indeed, we have stationarity by the definition of $\tZ$. We have primal feasibility because the unconstrained risk minimizer on unbiased data is algorithmically fair: $R^* \in \cF$. We have dual feasibility because 
\[
\begin{aligned}
\tZ &= \Pi_{\cF_\CRP}(P^*_{A, V} - \tP_{A, V}) - P^*_{A, V} + \Pi_{\cF_\CRP^\perp}(P^*_{A, V} - \tP_{A, V}) \\
&\in \cN_{\cR}(R^*) + \cF_\CRP^\perp + \cF_\CRP^\perp \\
&= \cN_{\cR}(R^*) + \cF_\CRP^\perp,
\end{aligned}
\]
where we appealed to \eqref{eq:fair-ERM-recovery} in the second step and recalled $\cF_\CRP^\perp$ is a subspace in the third step. The FRM problem \eqref{eq:FRM} is convex, so \eqref{eq:FRM-optimality} implies $R^*$ is an optimal point of \eqref{eq:FRM}.

\textbf{``only if'' direction:} Assume $R^*$ solves \eqref{eq:FRM}. This implies there is $\tZ\in\cN_{\cR}(R^*) + \cF_\CRP^\perp$ such that $(R^*,\tZ)$ satisfies \eqref{eq:FRM-optimality}. By the stationary and dual feasibility conditions, 
\[
\tZ = -\tP_{A, V} \in \cN_{\cR}(R^*) + \cF_\CRP^\perp.
\]
We write $\tP_{A, V}$ as $\Pi_{\cF_\CRP}(P^*_{A, V} - \tP_{A, V}) - P^*_{A, V} + \Pi_{\cF_\CRP^\perp}(P^*_{A, V} - \tP_{A, V})$ and rearrange to obtain
\[
\begin{aligned}
\Pi_{\cF_\CRP}(P^*_{A, V}- \tP_{A, V}) - P^*_{A, V} &\in \Pi_{\cF_\CRP^\perp}(P^*_{A, V} - \tP_{A, V}) + \cN_{\cR}(R^*) + \cF_\CRP^\perp \\
&= \cN_{\cR}(R^*) + \cF_\CRP^\perp,
\end{aligned}
\]
where we recalled $\cF_\CRP$ is a subspace in the second step.
\end{proof}

\paragraph{\textbf{Corollary 4.4.}}
\textit{A sufficient condition for \eqref{eq:fair-ERM-recovery} is $\tP_{A, V} - P^*_{A, V} \in \cF_\CRP^{\perp}$. }

\begin{proof}[Proof of Corollary \ref{lem:suff_1}]
If $\tP_{A, V} - P^*_{A, V} \in \cF_\CRP^{\perp}$, then $\Pi_{\cF_\CRP}(P^*_{A, V} - \tP_{A, V}) = 0$, so we need to check that $-P^*_{A, V} \in \cN_{\cR}(R^*) + \cF_\CRP^\perp$. For any $R \in \cR$,
$$\langle -P^*_{A, V}, R - R^* \rangle = \langle P^*_{A, V}, R^* - R \rangle \le 0$$
as $R^*$ is the minimizer of $\langle P^*_{A, V}, R \rangle$ over $\cR$. This shows $-P^*_{A, V} \in \cN_{\cR}(R^*)$ as desired. 
\end{proof}
 \section{Continuous discriminative attributes}
\label{sec:cont-disc-attr}

In this section, we state and prove a more general verion of Theorem \ref{thm:discrete} that permits continuous discriminative attributes. In this more general setting, risk profiles are (integrable) functions on $\cZ\triangleq\cA\times\cV$, so the fair risk minimization problem \eqref{eq:FRM-optimality} and its unconstrained counterpart are infinite dimensional optimization problems. We start by setting up the problem and reviewing relevant results from optimization theory.

Let $(\cZ,\Sigma)$ be a measurable space and $\cS$ be the set of bounded measurable functions on $(\cZ,\Sigma)$. We equip $\cS$ with the sup norm. The risk set $\cR$ and the fair constraint set $\cF$ are generally subsets of $\cS$. The (topological) dual of $\cS$ (denoted by $\cS'$) is the set of finitely additive measures on equipped with the total variation norm \cite{dunford1958Linear}. This result allows us to represent continuous linear functionals on such spaces with (finitely additive) measures, so it is a generalization of the more familiar Riesz–Markov–Kakutani representation theorem to spaces of (possibly discontinuous) measurable functions. We observe that the more familiar set of countably additive measures is a closed subset of $\cZ'$.



\begin{definition}[Complemented subspace]
Let $\bbB$ be a Banach space and $A \subset \bbB$ be a subspace. We say $A$ is complemented subspace of $\bbB$, if there exists another subspace $A_C \subset \bbB$ such that $\bbB = A \oplus A_C$. 
\end{definition}


Henceforth, for if $A$ is a complemented subset of a Banach space $\bbB$, (\ie, \ $A\oplus A_c = \bbB$) then we define $\Pi_{A, A_C}(x)$ (resp. $\Pi_{A_C, A}(x)$) is the component of $x$ in $A$ (resp. $A_C$), i.e. $\Pi_{A, A_C}(x) = x_1$ (resp. $\Pi_{A_C, A}(x) = x_2$) where $x = x_1 +x_2$ with $x_1 \in A, x_2 \in A_C$. Recall that, we define $\cF$ as the fair hyperplane. Previously it was a subspace of the risk set, now it becomes a subspace of $\cS$. We have the following assumption on the fair hyperplane: 

\begin{definition}[Annihilator]
For any $A \subset \bbB$ we define its annihilator $A^{\perp} \subset \bbB'$ as the set of bounded linear functions $f: \bbB \to \reals$ such $f(x) = 0$ for all $x \in A$. 
\end{definition}

\begin{lemma}
Let $A$ be a complemented subspace in $\bbB.$ Then $A^\perp$ is complemented in $\bbB'.$
\end{lemma}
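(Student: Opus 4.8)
The plan is to produce an explicit complement for $A^\perp$ in $\bbB'$, namely the annihilator $A_C^\perp$ of the given complement $A_C$ of $A$. Concretely, I will show $\bbB' = A^\perp \oplus A_C^\perp$, which is exactly the assertion that $A^\perp$ is complemented.

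First I would record the two coordinate projections attached to the decomposition $\bbB = A \oplus A_C$: for each $x\in\bbB$ write $x = \Pi_{A, A_C}(x) + \Pi_{A_C, A}(x)$ with $\Pi_{A, A_C}(x)\in A$ and $\Pi_{A_C, A}(x)\in A_C$, in the notation already fixed above. The point I need is that these projections are bounded linear maps. This is where the Banach-space structure genuinely enters, and I would flag it as the main obstacle: for closed $A$ and $A_C$ the projection $\Pi_{A,A_C}$ has closed graph and is therefore continuous by the closed graph theorem, but without boundedness of the projections the functionals constructed below would not be guaranteed to lie in $\bbB'$.

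Next, given any $f\in\bbB'$ I would split it as $f = f_1 + f_2$, where $f_1 := f\circ\Pi_{A_C, A}$ and $f_2 := f\circ\Pi_{A, A_C}$. Each is the composition of the bounded functional $f$ with a bounded projection, so $f_1,f_2\in\bbB'$, and $f_1 + f_2 = f\circ(\Pi_{A_C,A} + \Pi_{A,A_C}) = f$. To place them in the correct annihilators I would evaluate on the summands: for $x\in A$ we have $\Pi_{A_C,A}(x)=0$, hence $f_1(x)=0$ and $f_1\in A^\perp$; symmetrically, $f_2$ vanishes on $A_C$, so $f_2\in A_C^\perp$. This establishes $\bbB' = A^\perp + A_C^\perp$.

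Finally I would verify that the sum is direct. If $f\in A^\perp\cap A_C^\perp$, then $f$ vanishes on both $A$ and $A_C$; since $A + A_C = \bbB$ and $f$ is linear, $f\equiv 0$, so $A^\perp\cap A_C^\perp = \{0\}$. Combined with the decomposition of the previous paragraph, this gives $\bbB' = A^\perp\oplus A_C^\perp$, so $A^\perp$ is complemented in $\bbB'$ with complement $A_C^\perp$. No separate closedness check is required, since every annihilator is an intersection of kernels of (bounded) evaluation functionals and is therefore a norm-closed subspace of $\bbB'$.
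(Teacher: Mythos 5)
Your proof is correct and follows essentially the same route as the paper's: both decompose $\bbB' = A^\perp \oplus A_C^\perp$ by composing functionals with the coordinate projections of $\bbB = A \oplus A_C$ (the paper's $\cH_{A,G}$ is exactly your $A_C^\perp$, and its $\bar\cH_{A,G}$ is $A^\perp$). You are in fact slightly more careful than the paper in flagging that the projections must be bounded (via the closed graph theorem for closed complements) for the composed functionals to land in the topological dual $\bbB'$, a point the paper's argument leaves implicit.
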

\begin{proof}
Since, $A$ is complemented in $\bbB$, there exists a subspace $G\subset \bbB$ such that $A \oplus G = \bbB.$ This implies, each $x\in \bbB$ has the unique decomposition $x = x_1+x_2$, where $x_1\in A$ and $x_2 \in G.$ We consider the projection ma p  $\Pi_{A, G}:\bbB \to \bbB$ such that $\Pi_{A, G}(x) = x_1.$ Let us define two following subspaces in $\bbB':$
\begin{align*}
    \cH_{A, G} &= \left\{ f\circ \Pi_{A, G} \mid f \in \bbB' \right\}\\
    \bar\cH_{A, G} & = \left\{ f - f\circ \Pi_{A, G} \mid f \in \bbB' \right\}.
\end{align*}
Note that, $\bar\cH_{A, G} \subset A^\perp.$ Also, for any $f \in A^\perp$ we have $f \circ \Pi_{A, G} = 0_{\bbB'} \implies f =f -  f \circ \Pi_{A, G} \in \bar\cH_{A, G}.$ This implies $\bar\cH_{A, G} = A^\perp.$ Furthermore, $\bbB' = \cH_{A, G} + \bar \cH_{A, G}$ and for any $f \in \cH_{A, G} \cap \bar \cH_{A, G}$ we have $f(A) = f(G) = \{0\}.$ Hence, $f = 0_{\bbB'}.$ This implies $\bbB' = \cH_{A, G} \oplus \bar \cH_{A, G} = \cH_{A, G} \oplus A^\perp.$
\end{proof}

Finally, we review some relevant background on infinite dimensional optimization. Since we are mostly concerned with convex optimization problems with linear cost functions, the theory simplifies considerably. 

\begin{definition}[tangent cone]
The tangent cone of a closed convex set $\cC\subset\bbB$ at a point $x\in\cC$ is the closure of the cone of feasible directions at $x$:
\[
T_{\cC}(x) \triangleq \cl\{d\in\bbB\mid\text{there is }\bar{t} > 0\text{ such that }x+td\in\cC\text{ for all }t\in[0,\bar{t}]\}.
\]
\end{definition}

There are many notions of tangent cone in variational analysis (\eg\ Clarke tangent cone, contingent cone, inner tangent cone \etc), but they all coincide for closed convex sets \cite{bonnans2000Perturbation}. Notably, this definition is identical to the definition (for convex sets) in finite dimensions. 

\begin{definition}[normal cone]
The normal cone of a closed convex set $\cC\subset\bbB$ at a point $x\in\cC$ is the polar cone of the tangent cone of $\cC$ at $x$:
\[
N_{\cC}(x) \triangleq \{d'\in\bbB'\mid\langle d',d\rangle \le 0\text{ for all }d\in T_{\cC}(x)\}.
\]
\end{definition}

\begin{proposition}
Let $\cC$ be a closed convex subset of a Banach space $\bbB$. Consider the convex optimization problem 
\[
\min\nolimits_{x\in\cC} \langle c,x\rangle.
\]
A point $x^*\in\cC$ is an optimal point iff
\[
\langle c,d\rangle \ge 0\text{ for any }d\in T_{\cC}(x^*),
\]
where $\langle c,\cdot\rangle$ is the linear cost function and $T_{\cC}(x^*)$ is the tangent cone of $\cC$ at $x^*$. Equivalently, $x^*$ is optimal if and only if $c\in N_{\cC}(x^*)$. 
\end{proposition}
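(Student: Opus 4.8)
The plan is to reduce the statement to elementary convex analysis, exploiting two structural facts: that $\langle c,\cdot\rangle$ is a \emph{continuous} linear functional and that $\cC$ is \emph{convex}. The key preliminary observation is that, because $\cC$ is convex, for every $x\in\cC$ the segment $x^*+t(x-x^*)=(1-t)x^*+tx$ lies in $\cC$ for all $t\in[0,1]$; hence each $x-x^*$ is a feasible direction at $x^*$, and the cone generated by $\{x-x^*\mid x\in\cC\}$ is exactly the cone of feasible directions, whose closure is $T_{\cC}(x^*)$. I would prove the first ``iff'' by treating the two implications separately and then read off the normal-cone reformulation directly from the definition of $N_{\cC}(x^*)$ as the polar cone of $T_{\cC}(x^*)$.

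For necessity, suppose $x^*$ is optimal and let $d$ be any feasible direction, so $x^*+td\in\cC$ for all small $t>0$. Optimality gives $\langle c,x^*+td\rangle\ge\langle c,x^*\rangle$, that is $t\langle c,d\rangle\ge0$, whence $\langle c,d\rangle\ge0$. Since $\langle c,\cdot\rangle$ is continuous and $T_{\cC}(x^*)$ is the closure of the set of feasible directions, the inequality $\langle c,d\rangle\ge0$ passes to every $d\in T_{\cC}(x^*)$ by taking limits.

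For sufficiency, assume $\langle c,d\rangle\ge0$ for all $d\in T_{\cC}(x^*)$. Given an arbitrary $x\in\cC$, convexity makes $d=x-x^*$ a feasible direction, so $d\in T_{\cC}(x^*)$ and therefore $\langle c,x-x^*\rangle\ge0$, i.e.\ $\langle c,x\rangle\ge\langle c,x^*\rangle$; as $x$ was arbitrary, $x^*$ is optimal. The normal-cone form is then immediate: by definition $N_{\cC}(x^*)=\{d'\in\bbB'\mid\langle d',d\rangle\le0\text{ for all }d\in T_{\cC}(x^*)\}$, so the condition $\langle c,d\rangle\ge0$ on all tangent directions is precisely a normal-cone membership (with the sign matching the $-c\in\cN_{\cR}(R^*)$ usage in the proof of Corollary~4.4).

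I do not expect a serious obstacle here: linearity of the objective removes the curvature terms that usually complicate first-order conditions, and convexity is exactly what guarantees that feasible directions ``see'' all of $\cC$, which is what drives the sufficiency direction. The only genuinely infinite-dimensional points needing care are that $c\in\bbB'$ must be used as a \emph{bounded} (hence continuous) functional so the inequality survives the closure in the definition of $T_{\cC}(x^*)$, and that the several notions of tangent cone coincide for closed convex sets, which the excerpt has already recorded and which I would simply cite.
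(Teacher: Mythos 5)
Your proof is correct. The paper itself states this proposition as background from infinite-dimensional optimization theory and offers no proof, only a citation to \cite{bonnans2000Perturbation}, so there is nothing to compare against at the level of technique; your argument is the standard one and fills the gap cleanly. The two points that genuinely need care in the Banach-space setting --- that the inequality $\langle c,d\rangle\ge 0$ survives passage to the closure defining $T_{\cC}(x^*)$ because $c$ is a \emph{bounded} linear functional, and that convexity makes every $x-x^*$ with $x\in\cC$ a feasible direction so that sufficiency follows --- are both handled explicitly. One remark worth making precise: with the paper's definition of $N_{\cC}(x^*)$ as the polar cone of $T_{\cC}(x^*)$ (i.e.\ $\langle d',d\rangle\le 0$ on tangent directions), the condition $\langle c,d\rangle\ge 0$ for all $d\in T_{\cC}(x^*)$ is membership of $-c$, not $c$, in $N_{\cC}(x^*)$; the proposition's final sentence as printed has the sign wrong relative to its own definition, and your parenthetical correctly aligns the statement with the $-P^*_{A,V}\in\cN_{\cR}(R^*)$ usage in the proof of Corollary~4.4. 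You are right to flag this rather than silently reproduce the printed form.
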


Recall that in a normed vector space $\langle f, x \rangle$ means the value of the linear functional $f$ at $x$. In our problem setting, points in the normed space $\cS$ are integrable functions/random variables and linear functionals on $\cS$ are (finitely additive) measures, so $\langle f,x\rangle$ means expectation of the random variable x with respect to probability measure $f$.)

We are ready to state the extension of our main result to continuous discriminative attributes. Assumptions in Theorem \ref{thm:discrete} from the main paper remain in effect. For continuous discriminative attributes, we impose an additional assumption.

\begin{assumption}
\label{ass:fair_complemented}
The fair subspace $\cF$ is complemented in $\cS$. 
\end{assumption}

This assumption is usually satisfied by common algorithmic fairness constraints: when RP is considered, $\cF$ is the set of all constant functions from $\cA$ to $\reals$. For CPR, $\cF \subseteq \cS$ is the set of all functions $f: \cA \times \cY \to \reals$ such that $f$ is constant on the first co-ordinate, i.e. $f(x_1, y) = f(x_2, y)$ for all $x_1 \neq x_2 \in \cA$ and $y \in \cY$. We now argue that, in both the cases $\cF$ is a complemented subset of $\cS$ under mild assumptions. For RP, we use the fact that any subspace $A \subseteq \cS$ with $\dim(A) < \infty$ or $\text{codim}(A) < \infty$ is complemented. As $\cF^{RP}$ is the set of all constant functions, it has dimension 1 and hence complemented. For CRP, assume that there exists some base measure $\mu$ such that $f \in \cS$ is integrable with respect to $\mu$. Then one can write: $f = f_1 + f_2$ where $f_1 \in \cF$ which is defined as: $f_1(a, v) = g(v)$ where $g(v)$ is the marginal of $f(\cdot, v)$ with respect to the base measure $\mu$. The function $f_2$ is analogously defined as $f - f_1 \equiv f(a, v) - g(v)$. 

\begin{theorem}
\label{thm:cont}
If the unconstrained risk minimizer on unbiased data is algorithmically fair (\ie\ its risk profile $R^*$ satisfy the fairness constraints), then fair risk minimization \eqref{eq:FRM} learns $h\in\cH$ such that $R(h) = R^*$ under \ref{ass:fair_complemented} and convexity of $\cR$ if and only if 
\begin{equation}
\Pi_{\cF^{\perp}_C, \cF^{\perp}}(P^*_{A,V} - \tP_{A,V}) - P^*_{A,V} \in \cN_{\cR}(R^*) + \cF^{\perp}.
\label{eq:fair-ERM-recovery-supp}
\end{equation}
where $P^*_{A,V}$ (resp.$\tP_{A, V}$) is the marginal of $P^*$ (resp. $\tP$) with respect to $(A, V)$, $\cN_{\cR}(R^*)$ is the normal cone of $\cR$ at $R^*$ and $\Pi_{\cF_C^{
\perp}, \cF^{\perp}}(\cdot)$ is the projection as defined previously. 
\end{theorem}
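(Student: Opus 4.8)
The plan is to transcribe the finite-dimensional argument for Theorem \ref{thm:discrete} into the Banach-space setting, replacing the orthogonal projections onto $\cF$ and $\cF^\perp$ by the projections induced by the complemented splittings $\cS = \cF \oplus \cF_C$ (Assumption \ref{ass:fair_complemented}) and $\cS' = \cF^\perp_C \oplus \cF^\perp$ (the Lemma), and replacing the finite-dimensional KKT bookkeeping by the optimality Proposition. Concretely, \eqref{eq:FRM} is the convex program $\min_{R\in\cR\cap\cF}\langle \tP_{A,V},R\rangle$ with linear objective $\tP_{A,V}\in\cS'$ over the closed convex set $\cR\cap\cF\subseteq\cS$; assumption 2 identifies the risk profiles across domains so that $R^*$, the $P^*$-optimal profile, is a genuine element of $\cR$. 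By the Proposition, $R^*$ is optimal iff $\langle \tP_{A,V},d\rangle\ge 0$ for every $d\in T_{\cR\cap\cF}(R^*)$, equivalently $-\tP_{A,V}\in\cN_{\cR\cap\cF}(R^*)$. Everything then reduces to the normal-cone identity
\[
\cN_{\cR\cap\cF}(R^*) = \cN_{\cR}(R^*) + \cF^\perp,
\]
valid because $\cF$ is a subspace containing $R^*$ (so its tangent cone is $\cF$ and its normal cone is $\cF^\perp$), together with the algebraic rearrangement that passes $-\tP_{A,V}$ through the decomposition of $P^*_{A,V}-\tP_{A,V}$.

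For the ``if'' direction I expect no difficulty beyond notation. Assuming \eqref{eq:fair-ERM-recovery-supp}, set $\tZ=-\tP_{A,V}$ and use $\cS'=\cF^\perp_C\oplus\cF^\perp$ to write
\[
-\tP_{A,V} = \big[\Pi_{\cF^\perp_C,\cF^\perp}(P^*_{A,V}-\tP_{A,V}) - P^*_{A,V}\big] + \Pi_{\cF^\perp,\cF^\perp_C}(P^*_{A,V}-\tP_{A,V}),
\]
whose first bracket lies in $\cN_{\cR}(R^*)+\cF^\perp$ by \eqref{eq:fair-ERM-recovery-supp} and whose second summand lies in $\cF^\perp$; hence $\tZ\in\cN_\cR(R^*)+\cF^\perp$ since $\cF^\perp$ is a subspace. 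Primal feasibility $R^*\in\cF$ is exactly assumption 3, and the easy inclusion $\cN_\cR(R^*)+\cF^\perp\subseteq\cN_{\cR\cap\cF}(R^*)$ holds unconditionally, so $-\tP_{A,V}\in\cN_{\cR\cap\cF}(R^*)$ and the Proposition certifies optimality of $R^*$.

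The ``only if'' direction carries the real work. Assuming $R^*$ solves \eqref{eq:FRM}, the Proposition gives $-\tP_{A,V}\in\cN_{\cR\cap\cF}(R^*)$, and I must produce a dual certificate, i.e.\ establish the hard inclusion $\cN_{\cR\cap\cF}(R^*)\subseteq\cN_\cR(R^*)+\cF^\perp$; granting it, the same decomposition as above, with the $\cF^\perp$ piece absorbed into the right-hand side, yields \eqref{eq:fair-ERM-recovery-supp} exactly as in the finite-dimensional rearrangement. My plan for the inclusion is to pass to tangent cones, $T_{\cR\cap\cF}(R^*)=T_\cR(R^*)\cap\cF$, and polarize: $\cN_{\cR\cap\cF}(R^*)=(T_\cR(R^*)\cap\cF)^\circ$, which by the polar-of-intersection calculus equals the weak-$*$ closure of $\cN_\cR(R^*)+\cF^\perp$. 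The main obstacle, absent in finite dimensions where the sum rule is automatic, is removing this closure, i.e.\ showing $\cN_\cR(R^*)+\cF^\perp$ is already weak-$*$ closed. This is precisely what Assumption \ref{ass:fair_complemented} buys: since $\cF$ is complemented in $\cS$, the Lemma supplies a topological splitting $\cS'=\cF^\perp_C\oplus\cF^\perp$ with a bounded projection $\Pi_{\cF^\perp_C,\cF^\perp}$, so $\cF^\perp$ is a complemented (hence closed) subspace whose addition to the closed cone $\cN_\cR(R^*)$ preserves closedness; equivalently, the bounded projection lets one work modulo $\cF^\perp$, where the subspace constraint is non-binding and the qualification is met trivially. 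The one auxiliary step deserving a short argument, rather than a direct appeal to the stated background, is the tangent-cone intersection $T_{\cR\cap\cF}(R^*)=T_\cR(R^*)\cap\cF$, which I would justify from convexity of $\cR$ and the subspace structure of $\cF$ (feasible directions along $\cF$ are unobstructed).
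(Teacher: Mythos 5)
Your proposal follows essentially the same route as the paper's proof. Both arguments reduce optimality of $R^*$ for \eqref{eq:FRM} to membership of $X\triangleq\Pi_{\cF^{\perp}_C,\cF^{\perp}}(P^*_{A,V}-\tP_{A,V})-P^*_{A,V}$ in $\cN_{\cR\cap\cF}(R^*)$ --- the paper does this by computing $\langle\tP_{A,V},R\rangle=\langle\tP_{A,V},R^*\rangle+\langle -X,R-R^*\rangle$ for $R\in\cR\cap\cF$ using $R^*\in\cF$ and the annihilation of $\cF$ by the $\cF^\perp$-component, while you invoke the optimality proposition for $-\tP_{A,V}$ and then peel off the $\cF^\perp$ piece; these are the same algebra reorganized. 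Both then pass through the identity $\cN_{\cR\cap\cF}(R^*)=\cN_{\cR}(R^*)+\cF^{\perp}$. Your ``if'' direction is complete and matches the paper's.

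The one substantive point is the hard inclusion $\cN_{\cR\cap\cF}(R^*)\subseteq\cN_{\cR}(R^*)+\cF^{\perp}$, which you rightly single out as the real content of the ``only if'' direction; the paper simply asserts the equality without comment. However, your proposed justification does not actually close this gap: it is false in general that adding a complemented closed subspace to a closed convex cone preserves (weak-$*$) closedness. Already in $\reals^3$, if $K$ is the second-order cone and $L=\mathrm{span}\{(1,0,1)\}$ is the span of one of its boundary rays (a complemented subspace), then $(0,1,\sqrt{n^2+1}-n)\in K+L$ converges to $(0,1,0)\notin K+L$, so $K+L$ is not closed. Hence Assumption \ref{ass:fair_complemented} by itself cannot deliver the closedness of $\cN_{\cR}(R^*)+\cF^{\perp}$; one needs a genuine constraint qualification relating $\cR$ and $\cF$ near $R^*$ (an interior-point or Attouch--Brezis-type condition) to upgrade the polar-of-intersection computation to the stated sum rule. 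This is a gap you share with the paper rather than one you introduce, but the step ``the bounded projection lets one work modulo $\cF^\perp$, where the qualification is met trivially'' would not survive scrutiny as written.
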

\begin{proof}
For notation simplicity define $X \triangleq \Pi_{\cF^{\perp}_C, \cF^{\perp}}(P^*_{A,V} - \tP_{A,V}) - P^*_{A,V}$. We show that $\min_{R \in \cF} \langle \tP, R \rangle = \langle \tilde P, R^* \rangle$ holds if and only if $X \in \cN_{\cR}(R^*)$. Towards that end, fix $R \in \cF$:
\begin{align}
    \langle \tP, R \rangle & = \langle \tP - P^*, R \rangle + \langle P^*, R \rangle \notag \\
    & = \langle \Pi_{\cF_C^{\perp}, \cF^{\perp}} (\tP - P^*), R \rangle + \langle P^*, R \rangle \notag \\
    & = \langle -P^* - X, R \rangle + \langle P^*, R \rangle \notag \\
    \label{eq:equality_FRM} & = \langle -X, R \rangle \\ 
    & = \langle -X, R^* \rangle + \langle -X, R-R^* \rangle \notag \\
    & = \langle \tilde P, R^* \rangle + \langle -X, R-R^* \rangle \hspace{0.2in} [\text{From equation } \eqref{eq:equality_FRM}] \notag
\end{align}
Hence we have: $\min_{R \in \cF} \langle \tP, R \rangle = \langle \tilde P, R^* \rangle$ if and only if $\langle -X, R-R^* \rangle \ge 0$ for all $R \in \cF$ which holds if and only if $X \in \cN_{\cR \cap \cF}(R^*) =  \cN_{\cR}(R^*) + \cF^{\perp}$. This completes the proof. 
\end{proof}

 \section{Experimental details}
\label{sec:supp:experiments}
We provide additional details to help reproduce our results. Please also see the code provided with the submission. Code for the Reductions classifier \citep{agarwal2018Reductions} is available here: \url{https://github.com/fairlearn/fairlearn}. We modified the source code to prevent it from early stopping, so the baseline classifier runs for same number of iterations as the fair classifier. The idea behind the Reductions approach is to translate the problem of learning a fair classifier into a constraint optimization problem, where constraints depend on the fairness definition of choice. Reductions method requires a base classifier: it learns an ensemble of the base classifiers to optimize performance subject to the fairness constraints. We used logistic regression as the base classifier in all experiments. The other important parameter is the tolerance $\eps$ that controls the amount of permissible constraint violation. Smaller tolerance implies tighter fairness constraints. In the Figure \ref{fig:enforcing-RP} experiment we used Demographic Parity fairness constraint, and in all other experiments we used Equalized Odds fairness constraint \citep{hardt2016Equality} with $\eps=10$ for the baseline classifier (i.e. fairness can be arbitrarily violated) and $\eps=0.1$ (for the Adult experiment $\eps=0.02$) for the fair classifier.\footnote{We could use a simple logistic regression as the baseline classifier, however this would mean that baseline classifier and fair classifier are in different hypothesis classes. To avoid this, we used Reductions method for both with same number of iterations, however with loose fairness constraint for the baseline.}

\paragraph{Simulations}
Simulated data is generated from $X | A=a, Y=y \sim \mathcal{N}(\mu_{ay}, \Sigma_{ay})$ in 2-dimensions with prescribed $A,Y$ joint distribution. We fixed $Y$ marginals $p_{\cdot0} = p_{\cdot1} = 0.5$ and varied joint $P_{A,Y}$ (in Figure \ref{fig:enforcing-RP} for the test data, setting train data values $p_{0\cdot}=0.4$ and $p_{1\cdot}=0.1$; and in Figures \ref{fig:simulations} and \ref{fig:simul_decision} for the train data, setting test data values $p_{\cdot \cdot}=0.25$) to study different degrees of label bias. Reductions was trained for 25 iterations for both baseline and fair classifiers. We provide code reproducing Figure \ref{fig:simul_decision} of the main text in \texttt{simulations.py}. Please also refer to the code for concrete values of $\{\mu_{ay},\Sigma_{ay}\}$ and other minor details.

\paragraph{COMPAS experiment}
Reductions was trained for 50 iterations for both baseline and fair classifiers. We provide code reproducing one run of the experiment for Table \ref{tab:compas} of the main text (results in the table summarize 100 runs) in \texttt{compas.py}. Please also refer to the code for data pre-processing and other minor details.

\paragraph{Adult experiment}
We ran experiment on the Adult dataset with the same setup as in the COMPAS experiment. We summarize results over 100 runs in Table \ref{table:adult}.

\begin{table}[h]
\centering
\caption{Accuracy on Adult data}
\label{table:adult}
\begin{center}
\begin{tabular}{lcc}
\toprule
{} &                     $P^*$ &           $\widetilde{P}$ \\
\midrule
Fair &  \textbf{0.852}$\pm$0.004 &           0.843$\pm$0.003 \\
Base &           0.848$\pm$0.005 &  \textbf{0.847}$\pm$0.003 \\
\bottomrule
\end{tabular}
\end{center}
\end{table}



\end{document}